\documentclass[runningheads]{llncs}

\usepackage{hyperref}
\usepackage{placeins}
\usepackage{amssymb}
\usepackage{amstext}
\usepackage{todonotes}
\usepackage{amsthm}
\usepackage{amsmath}
\usepackage{array}
\usepackage{dsfont}
\allowdisplaybreaks

\newtheorem{thm}{\protect\theoremname}
\theoremstyle{plain}
\newtheorem{lem}[thm]{\protect\lemmaname}
\newtheorem{cor}[thm]{\protect\corname}
\makeatother
\usepackage[english]{babel} 
\providecommand{\lemmaname}{Lemma}
\providecommand{\theoremname}{Theorem}
\providecommand{\corname}{Corollary}

\begin{document}
\sloppy
\newcommand{\olivier}[1]{\textcolor{black}{#1}}
\newcommand{\otc}[1]{\textcolor{black}{#1}}
\newcommand{\crc}[1]{}

\newcommand{\carolanote}[1]{}
\newcommand{\carolanotetwo}[1]{}
\newcommand{\laurentnote}[1]{}
\newcommand{\oliviernote}[1]{}
\title{On averaging the best samples in evolutionary computation} 
\renewcommand{\topfraction}{.99}
\author{Laurent Meunier\inst{1}\inst{2}\inst{3}, Yann Chevaleyre\inst{2}, Jeremy Rapin\inst{1},\\ Cl\'ement W. Royer\inst{2},  Olivier Teytaud\inst{1}}
\authorrunning{L. Meunier, Y. Chevaleyre, J. Rapin, C. W. Royer, O. Teytaud}
\institute{Facebook Artificial Intelligence Research (FAIR), Paris, France\\
\and 
LAMSADE, CNRS, Universit\'e Paris-Dauphine, Universit\'e PSL, Paris, France
\and
Corresponding author: \textbf{\href{mailto:laurentmeunier@fb.com}{laurentmeunier@fb.com}}}
\def\R{\mathbb{R}}
\maketitle
\begin{abstract}
Choosing the right selection rate is a long standing issue in evolutionary computation. \otc{In the continuous unconstrained case, }we prove mathematically that a single parent $\mu=1$ leads to a sub-optimal simple regret in the case of the sphere function. We provide a theoretically-based selection rate $\mu/\lambda$ that leads  to better progress rates. With our choice of selection rate, we get a provable regret of order $O(\lambda^{-1})$ which has to be compared with $O(\lambda^{-2/d})$ in the case where $\mu=1$. We complete our study with experiments to confirm our theoretical claims. 
\end{abstract}
\section{Introduction}
In evolutionary computation, the selected population size often depends linearly on the total population size, with a ratio between $1/4$ and $1/2$: $0.270$ is proposed in~\cite{escompr}, ~\cite{HAN,cmsa} suggest $1/4$ and $1/2$.
However,
some sources~\cite{amorales} recommend a lower value $1/7$. Experimental results in~\cite{ratio} and theory in \cite{fournierAlgorithmica} together suggest a ratio $\min(d,\lambda/4)$ with $d$ the dimension, i.e. keep a population size at most the dimension. \cite{jeb} suggests to keep increasing $\mu$ besides that limit, but slowly enough so that that rule $\mu=\min(d,\lambda/4)$ would be still nearly optimal. Weighted recombination is common~\cite{weightdirk}, but not with a clear gap when compared to truncation ratios~\cite{esniko}, except in the case of large population size~\cite{sumotori}.
There is, overall, limited theory around the optimal choice of $\mu$ for optimization in the continuous setting. 
In the present paper, we focus on a simple case (sphere function and single epoch), but prove exact theorems. 
We point out that the single epoch case is important \otc{by} itself - this is fully parallel optimization~\cite{nie,mckay,bergstra2012random,bousquet}.
Experimental results with a publicly available platform support the approach.
\section{Theory}
We consider the case of a single batch of evaluated points. We generate $\lambda$ points according to some probability distribution. We then select the $\mu$ best and average them. The result is our approximation of the optimum. This is therefore an extreme case of evolutionary algorithm, with a single population; this is commonly used for e.g. hyperparameter search in machine learning~\cite{bergstra,bousquet}, though in most cases with the simplest case $\mu=1$.
\subsection{Outline}
We consider the optimization of the simple function $x\mapsto ||x-y||^2$ for an unknown $y\in \mathcal{B}(0,r)$.
In Section \ref{notations} we introduce notations.
In Section \ref{easier} we analyze the case of random search uniformly in a ball of radius $h$ centered on $y$. \otc{We can, therefore, exploit the knowledge of the optimum's position and assume that $y=0$.} We then extend the results to random search in a ball of radius $r$ centered on $0$, provided that $r>||y||$ and show that results are essentially the same up to an exponentially decreasing term (Section \ref{harder}).
\subsection{Notations}\label{notations}
We are interested in minimizing the function $f:x\in\mathbb{R}^d\mapsto ||x-y||^2$ for a fixed unknown $y$ in parallel one-shot black box optimization, i.e. we sample $\lambda$ points $X_1,...,X_\lambda$ from some distribution $\mathcal{D}$ and we search for $x^\star = \arg\min_x f(x)$. In what follows we will study the sampling from $\mathcal{B}(0,r)$, the uniform distribution on the $\ell_2$-ball of radius $r$; w.l.o.g. $\mathcal{B}(y,r)$ will also denote the $\ell_2$-ball centered in $y$ and of radius $r$.  \\
We are interested in comparing the strategy ``$\mu$-best'' vs ``$1$-best''. We denote $X_{(1)},...,X_{(\lambda)}$, the sorted values of $X_i$ \otc{i.e. $(1)$,\dots,$(\lambda)$ are such that} $f(X_{(1)})\leq...\leq f(X_{(\lambda)})$. The ``$\mu$-best'' strategy is to return  $\bar{X}_{(\mu)} = \frac1\mu\sum_{i=1}^\mu X_{(i)}$ as an estimate of the optimum and the ``1-best'' is to return $X_{(1)}$. We will hence compare :
$\mathbb{E}\left[f\left(\bar{X}_{(\mu)}\right)\right]$ and $\mathbb{E}\left[f\left(X_{(1)}\right)\right]$. 
We recall the definition of the gamma function $\Gamma$: $\forall z>0$, $\Gamma(z) = \int_0^{\infty}t^{z-1}e^{-t}dt$, as well as the property $\Gamma(z+1)=z\Gamma(z)$.
\subsection{When the center of the distribution is also the optimum} \label{easier}
In this section we assume that $y=0$ (i.e. $f(x)=||x||^2$) and consider sampling in $\mathcal{B}(0,r)\subset \mathbb{R}^d$. In this simple case, we show that keeping the best $\mu>1$ sampled points is asymptotically a better strategy than selecting a single best point. The choice of $\mu$ will be discussed in Section~\ref{harder}.
\begin{thm}
\label{thm:k_avg_best_vs_1_best}For all $\lambda>\mu\geq2$  and $d\geq2,$ $r>0$,
for $f(x)=\left\Vert x\right\Vert ^{2}$,
\[
\mathbb{E}_{X_1,...,X_\lambda\sim \mathcal{B}(0,r)}\left[f\left(\bar{X}_{(\mu)}\right)\right]<\mathbb{E}_{X_1,...,X_\lambda\sim \mathcal{B}(0,r)}\left[f\left(X_{(1)}\right)\right].
\]
\end{thm}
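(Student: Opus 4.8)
The plan is to exploit the rotational symmetry of the uniform law on the ball and reduce the whole statement to a one-dimensional inequality about moments of radial order statistics. First I would write each sample in polar form $X_i = R_i\Theta_i$, with $R_i=\|X_i\|$ and $\Theta_i=X_i/\|X_i\|$. For the uniform distribution on $\mathcal{B}(0,r)$ these two factors are independent, $\Theta_i$ is uniform on the unit sphere (so $\mathbb{E}[\Theta_i]=0$), and $R_i$ has density proportional to $\rho^{d-1}$ on $[0,r]$. Since $f(X_i)=R_i^2$, sorting by $f$-value is exactly sorting by radius, hence $X_{(i)}=R_{(i)}\Theta_{(i)}$ where $R_{(1)}\le\cdots\le R_{(\lambda)}$ are the radial order statistics and $\Theta_{(i)}$ is the angle attached to the point of $i$-th smallest radius.

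The crucial observation will be that the sorting permutation is a deterministic function of the radii alone, while the angles are independent of the radii. Therefore, conditionally on all the radii, the permuted angles $\Theta_{(1)},\dots,\Theta_{(\lambda)}$ are still i.i.d.\ uniform on the sphere. Expanding the square gives
\[
f(\bar X_{(\mu)})=\frac{1}{\mu^2}\sum_{i=1}^\mu R_{(i)}^2+\frac{1}{\mu^2}\sum_{i\neq j}R_{(i)}R_{(j)}\,\Theta_{(i)}\!\cdot\!\Theta_{(j)}.
\]
Taking the expectation and conditioning on the radii, every cross term vanishes since $\mathbb{E}[\Theta_{(i)}\!\cdot\!\Theta_{(j)}\mid\text{radii}]=\mathbb{E}[\Theta_{(i)}\mid\text{radii}]\cdot\mathbb{E}[\Theta_{(j)}\mid\text{radii}]=0$ for $i\neq j$. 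This leaves $\mathbb{E}[f(\bar X_{(\mu)})]=\frac{1}{\mu^2}\sum_{i=1}^\mu\mathbb{E}[R_{(i)}^2]$, while $\mathbb{E}[f(X_{(1)})]=\mathbb{E}[R_{(1)}^2]$, so the theorem reduces to the scalar inequality $\sum_{i=1}^\mu\mathbb{E}[R_{(i)}^2]<\mu^2\,\mathbb{E}[R_{(1)}^2]$.

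It then remains to compute the radial moments. Normalizing $r=1$ by scaling and setting $\alpha=2/d$, the variable $R^d$ is uniform on $[0,1]$, so $R_{(i)}^2=V_{(i)}^{\alpha}$ with $V_{(i)}\sim\mathrm{Beta}(i,\lambda-i+1)$. The Beta moment formula yields $\mathbb{E}[R_{(i)}^2]=C\,\Gamma(i+\alpha)/\Gamma(i)$ with $C=\Gamma(\lambda+1)/\Gamma(\lambda+1+\alpha)>0$ independent of $i$. Writing $g(i)=\Gamma(i+\alpha)/\Gamma(i)$, the goal becomes $\sum_{i=1}^\mu g(i)<\mu^2 g(1)$. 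The telescoping identity $g(i)/g(1)=\prod_{k=1}^{i-1}(1+\alpha/k)$, combined with $\alpha=2/d\le 1$ (this is precisely where $d\ge2$ is used), gives $g(i)/g(1)\le\prod_{k=1}^{i-1}(1+1/k)=i$, hence $\sum_{i=1}^\mu g(i)\le g(1)\,\mu(\mu+1)/2$. Since $\mu(\mu+1)/2<\mu^2$ for every $\mu\ge2$, the strict inequality follows.

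I expect the main obstacle to be the cancellation step: one must argue rigorously that conditioning on the radii leaves the angles i.i.d.\ uniform, so that the $O(\mu^2)$ cross terms all have zero expectation. Once that is secured, the problem collapses to the scalar moment inequality, whose only delicate point is recognizing that $\alpha\le1$ (forced by $d\ge2$) is exactly what makes the product telescope in our favor, while $\mu\ge2$ supplies the final strict gap.
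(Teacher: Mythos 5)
Your proof is correct, and it follows a genuinely different route from the paper's. The paper conditions on the event $f(X_{(\mu+1)})=h$: given this, the $\mu$ best points become i.i.d.\ uniform on $\mathcal{B}(0,\sqrt{h})$, the cross terms in $\|\bar{X}_{(\mu)}\|^2$ vanish by symmetry (Lemma~\ref{lm3}), a parallel computation handles $f(X_{(1)})$ (Lemmas~\ref{lm4} and~\ref{lm5}), and integrating over $h$ gives the exact closed form of Theorem~\ref{thm:formula_k_best}; the strict inequality then follows because that expression is strictly decreasing in $\mu$ for $d\geq 2$ and coincides with $\mathbb{E}\left[f(X_{(1)})\right]$ at $\mu=1$. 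You exploit the same underlying symmetry cancellation, but condition on the whole vector of radii instead: since the sorting permutation depends on the radii alone (a.s.\ unique, as ties have probability zero) and the angles are independent of the radii, the sorted angles remain i.i.d.\ uniform on the sphere, the cross terms die, and the theorem collapses to the scalar inequality $\sum_{i=1}^{\mu}\mathbb{E}[R_{(i)}^2]<\mu^2\,\mathbb{E}[R_{(1)}^2]$. Your treatment of this reduced problem is sound: $R_{(i)}^2=V_{(i)}^{2/d}$ with $V_{(i)}\sim\mathrm{Beta}(i,\lambda-i+1)$, the Beta moment formula gives $\mathbb{E}[R_{(i)}^2]=C\,\Gamma(i+\alpha)/\Gamma(i)$ with $\alpha=2/d$, and the telescoping bound $\Gamma(i+\alpha)/(\Gamma(i)\Gamma(1+\alpha))=\prod_{k=1}^{i-1}(1+\alpha/k)\leq i$ (valid precisely because $\alpha\leq 1$, i.e.\ $d\geq 2$) combined with $\mu(\mu+1)/2<\mu^2$ for $\mu\geq 2$ yields strictness even in the boundary case $d=2$, where the telescoping bound is an equality. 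The trade-off between the two routes: yours is more elementary and self-contained, since only an upper bound on $\sum_{i\leq\mu}\mathbb{E}[R_{(i)}^2]$ is needed rather than an exact evaluation; the paper's exact formula, on the other hand, is reused downstream for the progress rates (Corollary~\ref{cor:equiv}, Theorems~\ref{thm:noncentered} and~\ref{thm:asymp}), which a bound alone would not provide. The two are in fact reconcilable: your representation $\mathbb{E}\left[f(\bar{X}_{(\mu)})\right]=\frac{C}{\mu^2}\sum_{i=1}^{\mu}\Gamma(i+\alpha)/\Gamma(i)$ recovers Theorem~\ref{thm:formula_k_best} exactly via the identity $\sum_{i=1}^{\mu}\Gamma(i+\alpha)/\Gamma(i)=\Gamma(\mu+1+\alpha)/\left((1+\alpha)\Gamma(\mu)\right)$.
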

To prove this result, we will compute the value of $\mathbb{E}\left[f\left(\bar{X}_{(\mu)}\right)\right]$ for all $\lambda$ and $\mu$. The following lemma gives a simple way of computing the expectation of a function depending only on the norm of its argument.
\begin{lem}
\label{lem:ball_volume} Let $d\in \mathbb{N}^*$. Let $X$ be drawn uniformly in $\mathcal{B}(0,r)$ the $d$-dimensional ball of radius $r$.
Then for any measurable function $g:\mathbb{R}\rightarrow\mathbb{R}$, we have 
\begin{align*}
\mathbb{E}_{X\sim\mathcal{B}(0,r)}\left[g\left(\left\Vert X\right\Vert \right)\right] & =\frac{d}{r^d}\int_{0}^{r}g\left(\alpha\right)\alpha^{d-1}d\alpha.
\end{align*}
In particular, we have
$\mathbb{E}_{X\sim\mathcal{B}(0,r)}\left[\left\Vert X\right\Vert ^{2}\right] =\frac{d}{d+2}\times r^{2}$.
\end{lem}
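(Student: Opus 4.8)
The plan is to pass from the uniform distribution on the ball to the distribution of the single scalar quantity $\|X\|$, which is all that $g$ depends on. First I would recall that, by definition of the uniform law, $X$ has constant density $1/V_d(r)$ on $\mathcal{B}(0,r)$, where $V_d(r)$ denotes the volume of the $d$-dimensional ball of radius $r$. I would then write the expectation as an integral over the ball and switch to spherical coordinates, $x = \alpha\,\omega$ with $\alpha = \|x\|\in[0,r]$ and $\omega\in S^{d-1}$, for which the volume element factorizes as $\alpha^{d-1}\,d\alpha\,d\sigma(\omega)$, $\sigma$ being the surface measure on the unit sphere. Because $g(\|x\|)=g(\alpha)$ does not depend on the angular variable, the integral over $S^{d-1}$ separates out and simply contributes the surface area $A_{d-1}=\sigma(S^{d-1})$ of the unit sphere.

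The key observation is that this constant cancels. Indeed, the same spherical decomposition applied to the volume gives $V_d(r)=\int_0^r A_{d-1}\alpha^{d-1}\,d\alpha = A_{d-1}\,r^d/d$. Substituting both expressions yields
\[
\mathbb{E}_{X\sim\mathcal{B}(0,r)}\!\left[g(\|X\|)\right]
=\frac{A_{d-1}\int_0^r g(\alpha)\alpha^{d-1}\,d\alpha}{A_{d-1}\,r^d/d}
=\frac{d}{r^d}\int_0^r g(\alpha)\alpha^{d-1}\,d\alpha,
\]
which is exactly the claimed identity; notably, no closed form for $A_{d-1}$ or for $V_d(r)$ is needed. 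An equivalent route, which I would mention as an alternative, is to compute the distribution of the radius $R=\|X\|$ directly: by the scaling of volumes, $\mathbb{P}(R\le \alpha)=V_d(\alpha)/V_d(r)=(\alpha/r)^d$ on $[0,r]$, so $R$ has density $\tfrac{d}{r^d}\alpha^{d-1}$, and the formula follows by the transfer theorem $\mathbb{E}[g(\|X\|)]=\mathbb{E}[g(R)]$.

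For the stated particular case, I would simply apply the identity with $g(\alpha)=\alpha^2$, giving
\[
\mathbb{E}_{X\sim\mathcal{B}(0,r)}\!\left[\|X\|^2\right]
=\frac{d}{r^d}\int_0^r \alpha^{d+1}\,d\alpha
=\frac{d}{r^d}\cdot\frac{r^{d+2}}{d+2}
=\frac{d}{d+2}\,r^2.
\]

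The one step requiring care is the change of variables to spherical coordinates and the justification that Fubini applies, so that the angular and radial integrations genuinely separate; this is where I would be most explicit, since the rest is a routine evaluation. The measurability of $g$ is precisely the hypothesis that legitimizes writing the expectation as the stated integral, and nothing stronger (continuity, integrability a priori) is required for the identity to hold as an equality of extended reals.
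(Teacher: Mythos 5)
Your proof is correct and takes essentially the same approach as the paper's: both write the expectation against the uniform density, pass to spherical coordinates so the angular integral factors out, and cancel the resulting constants to get the factor $\frac{d}{r^d}$, then specialize to $g(\alpha)=\alpha^2$. The only differences are cosmetic and in your favor: you avoid the explicit gamma-function formulas for the ball volume and sphere area by deriving $V_d(r)=A_{d-1}r^d/d$ from the same radial decomposition (the paper instead plugs in the closed forms and simplifies via $\Gamma(z+1)=z\Gamma(z)$), and you handle general measurable $g$, whereas the paper's proof nominally restricts to continuous $g$.
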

\begin{proof}
Let $V(r,d)$ be the volume of a ball of radius $r$ in $\mathbb{R}^d$ and $S(r,d)$
be the surface of a sphere of radius $r$ in $\mathbb{R}^d$. Then
 $\forall r>0, V(r,d)=\frac{\pi^{d/2}}{\Gamma\left(\frac{d}{2}+1\right)}r^{d}$ and
 $S(r,d-1)=\frac{2\pi^{d/2}}{\Gamma\left(\frac{d}{2}\right)}r^{d-1}.$
Let $g:\mathbb{R}\rightarrow\mathbb{R}$ be a continuous function. Then:
\begin{align*}
\mathbb{E}_{X\sim\mathcal{B}(0,r)}\left[g\left(\left\Vert X\right\Vert \right)\right] & =
\frac{1}{V(r,d)}\int_{x:||x||\leq r} g(||x||)dx\\
&=\frac{1}{V(r,d)}\int_{\alpha=0}^{r}\int_{\theta:||\theta|| = \alpha} g(\alpha)d\theta d\alpha\\
&=\frac{1}{V(r,d)}\int_{\alpha=0}^{r} g(\alpha)S(\alpha,d-1) d\alpha\\
&=\frac{S(1,d-1)}{V(r,d)}\int_{\alpha=0}^{r} g(\alpha)\alpha^{d-1}d\alpha=\frac{d}{r^d}\int_{\alpha=0}^{r} g(\alpha)\alpha^{d-1}d\alpha.\\
\mbox{So,~}\mathbb{E}_{X\sim\mathcal{B}(r)}\left[\left\Vert X\right\Vert ^{2}\right] & = \frac{d}{r^d}\int_{\alpha=0}^{r} \alpha^2\alpha^{d-1}d\alpha\\
&=\frac{d}{r^d}\left[\frac{\alpha^{d+2}}{d+2}\right]^r_0=\frac{d}{d+2}r^2.
\end{align*}
\end{proof}
We now use the previous lemma to compute the expected regret\cite{bubeck2009pure} of the average of the $\mu$ best points conditionally to the value of $f(X_{(\mu+1)})$. The trick of the proof is that, conditionally to  $f(X_{(\mu+1)})$, the order of $X_{(1)},...,X_{(\mu)}$ has no influence over the average. Computing the expected regret conditionally to $f(X_{(\mu+1)})$ thus becomes straightforward.
\begin{lem}\label{lm3}
\label{lem:expectation_k_avg}For all $d>0$, $r^2>h>0$ and $\lambda>\mu\geq 1$, for $f(x)=\left\Vert x\right\Vert ^{2}$, 
\[
\mathbb{E}_{X_1,...,X_\lambda\sim \mathcal{B}(y,r)}\left[f\left(\bar{X}_{(\mu)}\right)\mid f(X_{(\mu+1)})=h\right]=\frac{h}{\mu}\times\frac{d}{d+2}.
\]
\end{lem}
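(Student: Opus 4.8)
The plan is to exploit the symmetry of the averaging operation together with the fact that, once we condition on the value $h$ of the $(\mu+1)$-st order statistic, the $\mu$ retained points become i.i.d.\ uniform samples in the smaller ball $\mathcal{B}(0,\sqrt h)$. Throughout I work in the setting of this section, $y=0$, so $\mathcal{B}(y,r)=\mathcal{B}(0,r)$ and $f(x)=\|x\|^2$.

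First I would rewrite the conditioning event geometrically. Since $f(x)=\|x\|^2$, we have $f(X_i)<h \iff X_i\in\mathcal{B}(0,\sqrt h)$, and the hypothesis $r^2>h$ guarantees $\mathcal{B}(0,\sqrt h)\subset\mathcal{B}(0,r)$, so the event is nondegenerate; the hypothesis $\lambda>\mu$ guarantees there is an $(\mu+1)$-st point to condition on. Conditioning on $f(X_{(\mu+1)})=h$ thus means that exactly $\mu$ of the $\lambda$ samples fall in the open ball $\mathcal{B}(0,\sqrt h)$ while one lies on its bounding sphere.

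Next comes the key reduction. Because $\bar X_{(\mu)}=\frac1\mu\sum_{i=1}^\mu X_{(i)}$ is a symmetric function of the retained points, their ordering is irrelevant; I would therefore condition additionally on which indices land inside the ball and invoke the i.i.d.\ structure of $X_1,\dots,X_\lambda$. Since a uniform law restricted to a measurable subset is again uniform on that subset, and since the samples are mutually independent, conditionally on the event the $\mu$ retained points are distributed as i.i.d.\ uniform variables $Z_1,\dots,Z_\mu$ in $\mathcal{B}(0,\sqrt h)$. This is precisely the ``order does not matter'' trick flagged before the statement, and justifying it cleanly is the step I expect to require the most care.

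Finally I would compute $\mathbb{E}\big[\,\|\tfrac1\mu\sum_{i=1}^\mu Z_i\|^2\,\big]$. Expanding the squared norm as $\frac1{\mu^2}\sum_{i,j}\langle Z_i,Z_j\rangle$ and taking expectations, the cross terms with $i\neq j$ vanish by independence together with $\mathbb{E}[Z_i]=0$ (central symmetry of $\mathcal{B}(0,\sqrt h)$), leaving $\frac1{\mu^2}\cdot\mu\,\mathbb{E}[\|Z_1\|^2]=\frac1\mu\mathbb{E}[\|Z_1\|^2]$. Applying Lemma~\ref{lem:ball_volume} with radius $\sqrt h$ gives $\mathbb{E}[\|Z_1\|^2]=\frac{d}{d+2}\,h$, and substituting yields $\frac h\mu\times\frac{d}{d+2}$, as claimed.
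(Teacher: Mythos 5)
Your proposal is correct and follows essentially the same route as the paper's proof: conditioning on $f(X_{(\mu+1)})=h$ to reduce the $\mu$ retained points to i.i.d.\ uniform samples on $\mathcal{B}(0,\sqrt h)$, expanding the squared norm of the average so that the cross terms vanish, and invoking Lemma~\ref{lem:ball_volume} with radius $\sqrt h$. If anything, you are slightly more explicit than the paper in justifying why the cross terms vanish (independence plus $\mathbb{E}[Z_i]=0$ by central symmetry), which the paper leaves implicit.
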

\begin{proof}
Let us first compute $\mathbb{E}\left[f\left(\bar{X}_{(\mu)}\right)\mid f(X_{(\mu+1)})=h\right]$.
Note that for any function $g:\mathbb{R}^d\rightarrow\mathbb{R}$ and distribution $\mathcal{D}$, we
have 
\begin{align*}
\mathbb{E}_{X_{1}\ldots X_{\lambda}\sim\mathcal{D}}&\left[g(\bar{X}_{(\mu)})\mid f(X_{(\mu+1)})=h\right]\\ &=\mathbb{E}_{X_{1}\ldots X_{\mu}\sim\mathcal{D}}\left[g\left(\frac{1}{\mu}\sum_{i=1}^{\mu}X_{i}\right)\mid X_{1}\ldots X_{\mu}\in\{x:f(x)\leq h\}\right]\\
 & =\mathbb{E}_{X_{1}\ldots X_{\mu}\sim\mathcal{D}_{h}}\left[g\left(\frac{1}{\mu}\sum_{i=1}^{\mu}X_{i}\right)\right],
\end{align*}
where $\mathcal{D}_h$ is the restriction of $\mathcal{D}$ to the level set $\{x:f(x) \le h\}$. In our setting, we have $\mathcal{D}=\mathcal{B}(0,r)$ and $\mathcal{D}_h = \mathcal{B}(0,\sqrt{h})$. Therefore,
\begin{align*}
 \mathbb{E}_{X_1,...,X_\lambda\sim\mathcal{B}(0,r)}&\left[f\left(\bar{X}_{(\mu)}\right)\mid f(X_{(\mu+1)})=h\right]\\
 &=\mathbb{E}_{X_1,...,X_\lambda\sim\mathcal{B}(0,r)}\left[||\bar{X}_{(\mu)}||^2\mid f(X_{(\mu+1)})=h\right]  \\
 & =\mathbb{E}_{X_{1}\ldots X_{\mu}\sim\mathcal{B}(0,\sqrt{h})}\left[||\frac1\mu\sum_{i=1}^\mu X_i||^2\right]\\
 & =\frac{1}{\mu^2}\mathbb{E}_{X_{1}\ldots X_{\mu}\sim\mathcal{B}(0,\sqrt{h})}\left[\sum_{i,j=1}^\mu X_i^TX_j\right]\\
 & = \frac{1}{\mu^2}\sum_{i,j=1,i\neq j}^\mu \mathbb{E}_{X_{i}\ldots X_{j}\sim\mathcal{B}(0,\sqrt{h})}\left[X_i^TX_j\right]\\
 &+\frac{1}{\mu^2}\sum_{i=1}^\mu\mathbb{E}_{X_{i}\sim\mathcal{B}(0,\sqrt{h})}\left[||X_i||^2\right]
 = \frac1\mu \mathbb{E}_{X\sim\mathcal{B}(0,\sqrt{h})}\left[||X||^2\right].
\end{align*}
By Lemma \ref{lem:ball_volume}, we have: 
$\mathbb{E}_{X\sim\mathcal{B}(0,\sqrt{h})}\left[||X||^2\right]=\frac{d}{d+2}h$.
Hence ~ 
 $\mathbb{E}_{X_1,...,X_\lambda\sim\mathcal{B}(0,r)}\left[f\left(\bar{X}_{(\mu)}\right)\mid f(X_{(\mu+1)})=h\right]= \frac{d}{d+2}\frac{h}{\mu}.$
\end{proof}
The result of Lemma~\ref{lem:expectation_k_avg} shows that $\mathbb{E}\left[f\left(\bar{X}_{(\mu)}\right)\mid f(X_{(\mu+1)})=h\right]$ depends linearly on $h.$ We now establish a similar dependency for 
$\mathbb{E}\left[f\left(X_{(1)}\right)\mid f(X_{(\mu+1)})=h\right]$.
\begin{lem}\label{lm4} \label{lem:expectation_1_cond} For $d>0,\ h>0,\ \lambda>\mu\geq1$,  and $f(x)=||x||^2$,
$$\mathbb{E}_{X_1,...,X_\lambda\sim\mathcal{B}(0,r)}\left[f\left(X_{(1)}\right)\mid f(X_{(\mu+1)})=h\right] = h \frac{\Gamma(\frac{d+2}{d})\Gamma(\mu+1)}{\Gamma(\mu+1+2/d)}.$$
\end{lem}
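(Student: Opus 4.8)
The plan is to reduce this conditional expectation to the expectation of the minimum of $\mu$ independent radial variables, reusing the conditioning argument from the proof of Lemma~\ref{lem:expectation_k_avg}. Conditionally on $f(X_{(\mu+1)})=h$, the $\mu$ best points $X_{(1)},\dots,X_{(\mu)}$ are i.i.d. uniform on the restricted domain $\{x:f(x)\le h\}=\mathcal{B}(0,\sqrt{h})$, and since these are precisely the points of smallest $f$-value, $f(X_{(1)})=\min_{1\le i\le\mu}f(X_i)$. Thus it suffices to compute $\mathbb{E}\left[\min_{1\le i\le\mu}\|X_i\|^2\right]$ for $X_1,\dots,X_\mu$ i.i.d. uniform in $\mathcal{B}(0,\sqrt{h})$.

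First I would determine the law of the scalar $\|X\|^2$ for a single $X\sim\mathcal{B}(0,\sqrt{h})$. Using the volume scaling of balls (as in Lemma~\ref{lem:ball_volume}), for $0\le t\le h$,
\[
\mathbb{P}(\|X\|^2\le t)=\mathbb{P}(\|X\|\le\sqrt{t})=(t/h)^{d/2},
\]
so that $U:=\|X\|^2/h$ has cumulative distribution function $u\mapsto u^{d/2}$ on $[0,1]$. Next I would pass to the minimum: writing $V=\min_{1\le i\le\mu}U_i$ for the i.i.d. copies $U_i$, independence gives $\mathbb{P}(V>v)=(1-v^{d/2})^\mu$ for $v\in[0,1]$, and the tail formula for the expectation of a variable supported on $[0,1]$ yields
\[
\mathbb{E}[V]=\int_0^1\mathbb{P}(V>v)\,dv=\int_0^1\left(1-v^{d/2}\right)^\mu dv.
\]

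The main computational step is to evaluate this integral in closed form. Substituting $w=v^{d/2}$ (so that $v=w^{2/d}$ and $dv=\frac{2}{d}w^{2/d-1}dw$) turns it into a Beta integral:
\[
\mathbb{E}[V]=\frac{2}{d}\int_0^1(1-w)^\mu w^{2/d-1}\,dw=\frac{2}{d}\,B\!\left(\tfrac{2}{d},\mu+1\right)=\frac{2}{d}\,\frac{\Gamma(2/d)\,\Gamma(\mu+1)}{\Gamma(\mu+1+2/d)}.
\]
Finally I would apply the recalled functional equation $\Gamma(z+1)=z\Gamma(z)$ with $z=2/d$ to rewrite $\frac{2}{d}\Gamma(2/d)=\Gamma(1+2/d)=\Gamma(\frac{d+2}{d})$, and multiply back by $h$ (since $f(X_{(1)})=hV$), which gives exactly the claimed expression. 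The only genuine obstacle is the evaluation of the Beta integral; the reduction to i.i.d. radii and the tail formula for the minimum are routine, and the gamma simplification is immediate from the property of $\Gamma$ stated in the notations.
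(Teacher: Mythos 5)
Your proposal is correct and follows essentially the same route as the paper's proof: conditioning to reduce to $\mu$ i.i.d.\ points in $\mathcal{B}(0,\sqrt{h})$, the radial CDF $(\beta/h)^{d/2}$, the tail formula for the expectation of the minimum, and evaluation via the Beta function $B\left(\tfrac{2}{d},\mu+1\right)$. The only difference is cosmetic (you normalize by $h$ before integrating and spell out the simplification $\tfrac{2}{d}\Gamma(2/d)=\Gamma\left(\tfrac{d+2}{d}\right)$, which the paper leaves implicit).
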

\begin{proof}
First note that using the same argument as in Lemma~\ref{lem:expectation_k_avg}, $\forall\beta\in (0,h]$:
\begin{align*}
\mathbb{P}_{X_{1}\ldots X_{\lambda}\sim\mathcal{B}(0,\sqrt{h})}&\left[f\left(X_{(1)}\right)>\beta\mid f(X_{(\mu+1)})=h\right]\\
&=\mathbb{P}_{X_{1}\ldots X_{\mu}\sim\mathcal{B}(0,\sqrt{h})}\left[f\left(X_{1}\right)>\beta,\ldots,f\left(X_{\mu}\right)>\beta\right]\\
 & =\mathbb{P}_{X\sim\mathcal{B}(0,\sqrt{h})}\left[f\left(X\right)>\beta\right]^{\mu}.
\end{align*}
Recall that the volume of a $d$-dimensional ball of radius $r$ is
proportional to $r^{d}$. Thus, we get:
\begin{align*}
\mathbb{P}_{X\sim\mathcal{B}(0,\sqrt{h})}\left[f\left(X\right)<\beta\right] & =\frac{\sqrt{\beta}^{d}}{\sqrt{h}^{d}}=\left(\frac{\beta}{h}\right)^{\frac{d}{2}}.
\end{align*}
It is known that for every positive random variable $X$, $\mathbb{E}(X)=\int_0^\infty\mathbb{P}(X>\beta)d\beta$. Therefore:
\begin{align*}
\mathbb{E}_{S}\left[f\left(X_{(1)}\right)\mid f(X_{(\mu+1)})=h\right] &=\int_0^h \mathbb{P}\left[f\left(X_{(1)}\right)>\beta\mid f(X_{(\mu+1)})=h\right]d\beta\\
&=\int_0^h \left(1-\left(\frac{\beta}{h}\right)^{\frac{d}{2}}\right)^{\mu}d\beta\\
&=h\int_0^1 \left(1-u^{\frac{d}{2}}\right)^{\mu}du\\
&=h\frac2d \int_0^1 \left(1-t\right)^{\mu} t^{2/d-1}dt
= h \frac{\Gamma(\frac{d+2}{d})\Gamma(\mu+1)}{\Gamma(\mu+1+2/d)}.
\end{align*}
To obtain the last equality, we identify the integral with the beta function of parameters $\mu+1$ and $\frac2d$.
\end{proof}
We now directly compute $\mathbb{E}_{X_1,...,X_\lambda\sim\mathcal{B}(0,r)}\left[ f(X_{(1)})\right]$.
\begin{lem}\label{lem:expectation_1}\label{lm5}
For all $d>0$, $\lambda>0$ and $r>0$:
 $$\mathbb{E}_{X_1,...,X_\lambda\sim\mathcal{B}(0,r)}\left[ f(X_{(1)})\right]=r^2\frac{\Gamma(\frac{d+2}{d})\Gamma(\lambda+1)}{\Gamma(\lambda+1+2/d)}.$$
\end{lem}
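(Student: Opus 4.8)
The plan is to recognize that Lemma~\ref{lem:expectation_1} is essentially a special case of the computation carried out in Lemma~\ref{lem:expectation_1_cond}, but with the outer conditioning removed and applied directly to the full sample of $\lambda$ points drawn from $\mathcal{B}(0,r)$. The key observation is that $X_{(1)}$ is the minimizer of $f$ over all $\lambda$ independent points, so $f(X_{(1)}) = \min_{1\le i \le \lambda} f(X_i)$, and I can compute its expectation by the tail-integral formula $\mathbb{E}[f(X_{(1)})] = \int_0^\infty \mathbb{P}[f(X_{(1)}) > \beta]\,d\beta$ exactly as in the proof of Lemma~\ref{lem:expectation_1_cond}.

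First I would compute the survival function of $f(X_{(1)})$. Since the $X_i$ are i.i.d., we have $\mathbb{P}[f(X_{(1)}) > \beta] = \mathbb{P}[f(X) > \beta]^\lambda$ for a single $X \sim \mathcal{B}(0,r)$. By the volume-ratio argument already used, for $0 \le \beta \le r^2$ we have $\mathbb{P}_{X\sim\mathcal{B}(0,r)}[f(X) < \beta] = (\beta/r^2)^{d/2}$, hence $\mathbb{P}[f(X_{(1)}) > \beta] = \bigl(1 - (\beta/r^2)^{d/2}\bigr)^\lambda$ on $[0,r^2]$, and the survival probability is zero for $\beta > r^2$ since all points lie in the ball.

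Next I would carry out the integral. Writing $\mathbb{E}[f(X_{(1)})] = \int_0^{r^2} \bigl(1 - (\beta/r^2)^{d/2}\bigr)^\lambda\,d\beta$, I substitute $u = \beta/r^2$ to pull out a factor $r^2$, then substitute $t = u^{d/2}$ to transform the integrand into $\frac{2}{d}\int_0^1 (1-t)^\lambda t^{2/d - 1}\,dt$. This last integral is the Beta function $B(2/d, \lambda+1) = \frac{\Gamma(2/d)\Gamma(\lambda+1)}{\Gamma(\lambda+1+2/d)}$, and using $\frac{2}{d}\Gamma(2/d) = \Gamma(2/d + 1) = \Gamma(\tfrac{d+2}{d})$ together with the functional equation $\Gamma(z+1)=z\Gamma(z)$ gives exactly the claimed expression $r^2 \frac{\Gamma(\frac{d+2}{d})\Gamma(\lambda+1)}{\Gamma(\lambda+1+2/d)}$.

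The computation is almost entirely parallel to Lemma~\ref{lem:expectation_1_cond}, so there is no genuine obstacle; indeed one could even derive this lemma directly from the previous one by integrating over the distribution of $h = f(X_{(\mu+1)})$ in the special case $\mu = \lambda - 1$, where $X_{(\mu+1)} = X_{(\lambda)}$ is the worst point. The only point requiring minor care is the upper limit of integration: unlike a general positive random variable, $f(X_{(1)})$ is supported on $[0, r^2]$, so the tail integral truncates at $r^2$ rather than running to infinity, and one must confirm the survival function vanishes beyond that point before identifying the Beta integral.
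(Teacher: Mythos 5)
Your proof is correct and follows essentially the same route as the paper: compute the survival function of $f(X_{(1)})$ via independence and the volume-ratio formula for a single point, then apply the tail-integral identity $\mathbb{E}[f(X_{(1)})]=\int_0^{r^2}\mathbb{P}[f(X_{(1)})>\beta]\,d\beta$ and identify the Beta integral, exactly as in Lemma~\ref{lem:expectation_1_cond}. Your writeup is in fact slightly more careful than the paper's, which compresses the integration into a citation of Lemma~\ref{lem:expectation_1_cond} and whose displayed formula for the survival probability contains a typo (it should read $\left(1-\left(\beta/r^2\right)^{d/2}\right)^{\lambda}$, which is what you derived).
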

\begin{proof}
As in Lemma~\ref{lem:expectation_1_cond}, we have for any 
$\beta \in (0,r^2]$:
\begin{align*}
\mathbb{P}_{X_{1}\ldots X_{\lambda}\sim\mathcal{B}(0,r)}\left[f\left(X_{(1)}\right)>\beta\right] &=\mathbb{P}_{X_{1}\ldots X_{\lambda}\sim\mathcal{B}(0,r)}\left[f\left(X_{1}\right)>\beta,...,f\left(X_{\lambda}\right)>\beta\right]\\
&=\mathbb{P}_{X\sim\mathcal{B}(0,r)}\left[f\left(X\right)>\beta\right]^\lambda\\
&= \left(\frac{\sqrt{\beta}}{r}\right)^{d}.
\end{align*}
The result then follows by reasoning as in the proof of Lemma~\ref{lem:expectation_1_cond}. 
\end{proof}
By combining the results above, we obtain the exact formula for 
$\mathbb{E}\left[ f(\bar X_{(\mu)})\right]$.
\begin{thm}\label{thm:formula_k_best}
For all $d>0$, $r>0$ and $\lambda >\mu \ge 1$:
$$\mathbb{E}_{X_{1}\ldots X_{\lambda}\sim\mathcal{B}(0,r)}\left[ f(\bar X_{(\mu)})\right]=\frac{r^2d\times\Gamma(\lambda+1)\Gamma(\mu+1+2/d)}{\mu(d+2)\Gamma(\mu+1)\Gamma(\lambda+1+2/d)}.$$
\end{thm}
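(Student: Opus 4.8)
The plan is to assemble the closed form directly from the conditional expectations already established in Lemmas~\ref{lm3}, \ref{lm4} and~\ref{lm5} via the tower property, without ever computing the distribution of the order statistic $f(X_{(\mu+1)})$ from scratch. The key observation is that $\mathbb{E}\left[f(X_{(\mu+1)})\right]$ is the single unknown linking these three results, and it can be eliminated algebraically between two of them.

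First I would apply the law of total expectation to Lemma~\ref{lm3}. Since the conditional expectation $\mathbb{E}\left[f(\bar X_{(\mu)})\mid f(X_{(\mu+1)})=h\right]=\frac{d}{d+2}\frac{h}{\mu}$ is linear in $h$, integrating against the law of $h:=f(X_{(\mu+1)})$ (supported on $(0,r^2]$) gives
\[
\mathbb{E}\left[f(\bar X_{(\mu)})\right]=\frac{d}{(d+2)\mu}\,\mathbb{E}\left[f(X_{(\mu+1)})\right].
\]
So it only remains to identify $\mathbb{E}\left[f(X_{(\mu+1)})\right]$. For this I would run the same tower argument on Lemma~\ref{lm4}, whose conditional expectation is again linear in $h$, yielding
\[
\mathbb{E}\left[f(X_{(1)})\right]=\frac{\Gamma(\tfrac{d+2}{d})\,\Gamma(\mu+1)}{\Gamma(\mu+1+2/d)}\,\mathbb{E}\left[f(X_{(\mu+1)})\right].
\]

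Solving this for $\mathbb{E}\left[f(X_{(\mu+1)})\right]$ and substituting the closed form of $\mathbb{E}\left[f(X_{(1)})\right]=r^2\frac{\Gamma(\frac{d+2}{d})\Gamma(\lambda+1)}{\Gamma(\lambda+1+2/d)}$ from Lemma~\ref{lm5}, the factor $\Gamma(\tfrac{d+2}{d})$ cancels and leaves
\[
\mathbb{E}\left[f(X_{(\mu+1)})\right]=r^2\,\frac{\Gamma(\lambda+1)\,\Gamma(\mu+1+2/d)}{\Gamma(\mu+1)\,\Gamma(\lambda+1+2/d)}.
\]
Plugging this back into the first relation and collecting the prefactor $\frac{d}{(d+2)\mu}$ produces exactly the claimed expression.

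The computation is routine once the lemmas are in hand; the only points needing care are the validity of the tower property — which holds because $f$ is bounded by $r^2$ on $\mathcal{B}(0,r)$, so every expectation is finite — and the gamma-factor bookkeeping, in particular noting that $\Gamma(\tfrac{d+2}{d})=\Gamma(1+2/d)$ so that it cancels cleanly between Lemmas~\ref{lm4} and~\ref{lm5}. I do not anticipate a genuinely hard step: the one conceptual move is recognising that $\mathbb{E}\left[f(X_{(\mu+1)})\right]$ should not be computed by an independent order-statistic/Beta-function integral, but rather read off by eliminating it between the two linear-in-$h$ tower relations.
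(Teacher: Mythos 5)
Your proposal is correct and follows essentially the same route as the paper: the paper's proof is exactly the tower property over $f(X_{(\mu+1)})$ combined with Lemma~\ref{lm3}, then elimination of $\mathbb{E}\left[f(X_{(\mu+1)})\right]$ via the linear relation of Lemma~\ref{lm4}, and finally the closed form of $\mathbb{E}\left[f(X_{(1)})\right]$ from Lemma~\ref{lm5}. The only difference is presentational (you solve for $\mathbb{E}\left[f(X_{(\mu+1)})\right]$ explicitly before substituting, whereas the paper chains the substitutions), so the two arguments coincide.
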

\begin{proof}
The proof follows by applying our various lemmas and integrating over all possible values for $h$. We have:
\begin{align*}
&\mathbb{E}_{X_{1}\ldots X_{\lambda}\sim\mathcal{B}(0,r)}\left[ f(\bar X_{(\mu)})\right]\\
&=\mathbb{E}\left[\mathbb{E}\left[ f(\bar X_{(\mu)})\mid f\left(X_{(\mu+1)}\right)\right]\right]\\
&=\frac{1}{\mu}\frac{d}{d+2}\mathbb{E}\left[ f\left(X_{(\mu+1)}\right)\right]\mbox{  by Lemma \ref{lm3}}\\
&=\frac{1}{\mu}\frac{d}{d+2} \frac{\Gamma(\mu+1+2/d)}{\Gamma(\mu+1)\Gamma(\frac{d+2}{d})}\mathbb{E}\left[\mathbb{E}\left[ f( X_{(1)})\mid f\left(X_{(\mu+1)}\right)\right]\right]\mbox{ by Lemma \ref{lm4}}\\
&=\frac{1}{\mu}\frac{d}{d+2} \frac{\Gamma(\mu+1+2/d)}{\Gamma(\mu+1)\Gamma(\frac{d+2}{d})}\mathbb{E}\left[ f( X_{(1)})\right]\\
&=\frac{r^2d\times\Gamma(\lambda+1)\Gamma(\mu+1+2/d)}{\mu(d+2)\Gamma(\mu+1)\Gamma(\lambda+1+2/d)}\mbox{ by Lemma~\ref{lm5}}.
\end{align*}
\end{proof}
We have checked experimentally the result of Theorem~\ref{thm:noncentered} (see Figure~\ref{fig:exp_th_c}): the result of Theorem~\ref{thm:k_avg_best_vs_1_best} follows from Theorem~\ref{thm:noncentered} since for $d\geq2$, $\lambda$ and $r$ fixed, $\mathbb{E}\left[ f(\bar X_{(\mu)})\right]$ is strictly decreasing in $\mu$. In addition, we can obtain asymptotic progress rates:
\begin{cor}\label{cor:equiv}
Consider $d>0$.
When $\lambda\rightarrow\infty$, we have
$$\mathbb{E}_{X_{1}\ldots X_{\lambda}\sim\mathcal{B}(0,r)}\left[ f(\bar X_{(\mu)})\right]\sim \lambda^{-\frac{2}{d}}\frac{r^2d\times\Gamma(\mu+1+2/d)}{\mu(d+2)\Gamma(\mu+1)},$$
$$\mbox{while if $\lambda\rightarrow \infty$ and $\mu(\lambda)\rightarrow\infty$,~}
\mathbb{E}_{X_{1}\ldots X_{\lambda}\sim\mathcal{B}(0,r)}\left[ f(\bar X_{(\mu(\lambda))})\right]\sim r^2\frac{ d}{d+2}\frac{\mu(\lambda)^{\frac{2}{d}-1}}{\lambda^{\frac{2}{d}}}.$$
As a result, $\forall c\in(0,1)$, $\mathbb{E}\left(f(\bar X_{(\left\lfloor c\lambda\right\rfloor)})\right)\in \Theta\left(\frac1\lambda\right)$ and $\mathbb{E}\left(f(X_{(1)})\right)\in \Theta\left(\frac{1}{\lambda^{2/d}}\right)$.
\end{cor}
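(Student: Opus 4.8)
The plan is to read off all three asymptotics directly from the closed form in Theorem~\ref{thm:formula_k_best}, the only analytic input being the classical Stirling-type asymptotic for ratios of gamma functions, namely $\Gamma(z+a)/\Gamma(z+b)\sim z^{a-b}$ as $z\to\infty$ for fixed $a,b$. First I would isolate in the formula of Theorem~\ref{thm:formula_k_best} the two gamma ratios that carry the asymptotics: the factor $\Gamma(\lambda+1)/\Gamma(\lambda+1+2/d)$ depending on $\lambda$, and the factor $\Gamma(\mu+1+2/d)/\Gamma(\mu+1)$ depending on $\mu$, with $\frac{r^2 d}{\mu(d+2)}$ as the elementary prefactor. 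For the first equivalence $\mu$ is fixed, so the $\mu$-dependent gamma ratio together with $1/\mu$ is a constant; applying the gamma ratio asymptotic with $z=\lambda+1$, $a=0$, $b=2/d$ gives $\Gamma(\lambda+1)/\Gamma(\lambda+1+2/d)\sim\lambda^{-2/d}$, and multiplying by that constant yields the stated equivalent.

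For the second equivalence both indices diverge, so I would apply the gamma ratio asymptotic twice: once to get $\Gamma(\lambda+1)/\Gamma(\lambda+1+2/d)\sim\lambda^{-2/d}$, and once to get $\Gamma(\mu+1+2/d)/\Gamma(\mu+1)\sim\mu^{2/d}$; combining these with the $1/\mu$ prefactor produces $r^2\frac{d}{d+2}\,\mu^{2/d-1}\lambda^{-2/d}$. The one point deserving care is that here $\mu=\mu(\lambda)$ is tied to $\lambda$, so the $\mu$-asymptotic must be used as a composition: the statement that $B(m)/m^{2/d}\to1$ as $m\to\infty$, evaluated along the sequence $m=\mu(\lambda)\to\infty$, still gives $B(\mu(\lambda))/\mu(\lambda)^{2/d}\to1$, and since two sequences each tending to their respective equivalents have product tending to the product of the equivalents, the two pieces combine with no uniformity issue. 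This composition step is the only place where one must argue rather than compute, and I expect it to be the main (mild) obstacle.

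Finally, the $\Theta$ statements follow by specialization. Taking $\mu=\lfloor c\lambda\rfloor$ with $c\in(0,1)$ forces $\mu\to\infty$ and $\mu\sim c\lambda$, so substituting into the second equivalence gives $\mu^{2/d-1}\lambda^{-2/d}\sim c^{2/d-1}\lambda^{-1}$, whence $\mathbb{E}(f(\bar X_{(\lfloor c\lambda\rfloor)}))\in\Theta(1/\lambda)$. For the $1$-best I would not invoke Theorem~\ref{thm:formula_k_best} at all but use Lemma~\ref{lem:expectation_1} directly, whose $\lambda$-dependence is again the ratio $\Gamma(\lambda+1)/\Gamma(\lambda+1+2/d)\sim\lambda^{-2/d}$, giving $\mathbb{E}(f(X_{(1)}))\sim r^2\Gamma(\tfrac{d+2}{d})\,\lambda^{-2/d}\in\Theta(1/\lambda^{2/d})$. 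Comparing the two exponents $-1$ and $-2/d$ then makes the advantage of averaging for $d\geq 3$ transparent.
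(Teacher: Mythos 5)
Your proposal is correct and follows essentially the same route as the paper: the paper's proof simply invokes Stirling's asymptotic for the gamma function and applies it to the closed form of Theorem~\ref{thm:formula_k_best}, which is exactly what your gamma-ratio asymptotic $\Gamma(z+a)/\Gamma(z+b)\sim z^{a-b}$ accomplishes (it is the standard consequence of Stirling). Your write-up is in fact more careful than the paper's one-line proof, notably in making explicit the composition argument needed when $\mu(\lambda)$ diverges with $\lambda$, and in handling the $1$-best case via Lemma~\ref{lem:expectation_1}.
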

\begin{proof}
We recall the Stirling equivalent formula for the gamma function: when $z\rightarrow\infty$,
$$\Gamma(z) = \sqrt{\frac{2\pi}{z}}\left(\frac{z}{e}\right)^z\left(1+O\left(\frac{1}{z}\right)\right).$$
Using this approximation, we get the expected results.
\end{proof}
This result shows that by keeping a single parent, we lose more than a constant factor: the progress rate is significantly impacted. Therefore it is preferable to use more than one parent.
\def\rere{
\subsection{Ratio mean $/$ min when sampling in $B(y,r)$}\label{ouaip}
Consider
$mean_\mu$ the expectation of $g(\bar x_\mu)$
and $min_\mu$ the expectation of $g(x_{1})$.
Using previous results,
\begin{itemize}
\item $mean_h\leq\frac h\mu \frac{d}{d+2}$.
\item $min_h\geq h\frac{\Gamma(\frac{d+2}{d})\Gamma(\mu+1)}{\Gamma(\mu+1+\frac2d)}$.
\end{itemize}
Then 
$$\frac{mean_h}{min_h}=\frac{d\Gamma(\mu+1+2/d)}{\mu(d+2)\Gamma(\frac{d+2}d)\Gamma(\mu+1)}\leq \mu^{2/d-1}(1+o(1)),$$
where the $o(1)$ depends on $d$ only.
}
\subsection{Convergence when the sampling is not centered on the optimum}\label{harder}
So far we treated the case where the center of the distribution and the optimum are the same. We now assume that we sample from the distribution $\mathcal{B}(0,r)$ and that the function $f$ is $f(x) = ||x-y||^2$ with $||y||\leq r$. We define $\epsilon = \frac{||y||}{r}$. 
\begin{lem}\label{lem:bin} Let $r>0$, $d>0$, $\lambda>\mu\geq1$, we have:
 $$\mathbb{P}_{X_{1}\ldots X_{\lambda}\sim\mathcal{B}(0,r)}(f(X_{(\mu+1)})>(1-\epsilon)^2r^2)= \mathbb{P}_{U\sim B\left(\lambda,(1-\epsilon)^{d}\right)}\left(U\leq \mu\right),$$
where $B(\lambda,p)$ is a binomial law of parameters $\lambda$ and $p$.
\end{lem}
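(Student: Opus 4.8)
The plan is to reduce the statement about the order statistic $X_{(\mu+1)}$ to a statement about a sum of independent indicators. First I would observe that, by the very definition of the sorted values, the event $\{f(X_{(\mu+1)})>(1-\epsilon)^2r^2\}$ holds if and only if \emph{at most} $\mu$ of the $\lambda$ sampled points $X_i$ satisfy $f(X_i)\le(1-\epsilon)^2r^2$: indeed, the $(\mu+1)$-th smallest value exceeds the threshold exactly when fewer than $\mu+1$ points fall below it. Introducing the count $U=\#\{i:f(X_i)\le(1-\epsilon)^2r^2\}$, this rewrites the left-hand side as $\mathbb{P}(U\le\mu)$.

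Next, since $X_1,\dots,X_\lambda$ are i.i.d.\ uniform on $\mathcal{B}(0,r)$, the indicators $\mathds{1}[f(X_i)\le(1-\epsilon)^2r^2]$ are i.i.d.\ Bernoulli variables, so $U\sim B(\lambda,p)$ with $p=\mathbb{P}_{X\sim\mathcal{B}(0,r)}(f(X)\le(1-\epsilon)^2r^2)$. It then remains only to identify $p=(1-\epsilon)^d$, after which the claim reads $\mathbb{P}(U\le\mu)$ with $U\sim B(\lambda,(1-\epsilon)^d)$, as desired.

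To compute $p$, I would rewrite the level-set condition geometrically: $f(X)\le(1-\epsilon)^2r^2$ is equivalent to $\|X-y\|\le(1-\epsilon)r$, i.e.\ $X\in\mathcal{B}(y,(1-\epsilon)r)$. The crucial step is to check that this smaller ball is entirely contained in the sampling ball $\mathcal{B}(0,r)$: for any $z$ with $\|z-y\|\le(1-\epsilon)r$, the triangle inequality together with $\|y\|=\epsilon r$ gives $\|z\|\le(1-\epsilon)r+\epsilon r=r$. Because the inner ball is therefore not clipped by the boundary of $\mathcal{B}(0,r)$, the probability $p$ is simply a ratio of volumes; using that the volume of a $d$-dimensional ball is proportional to the $d$-th power of its radius (as in Lemma~\ref{lem:ball_volume}) we obtain $p=\big((1-\epsilon)r\big)^d/r^d=(1-\epsilon)^d$.

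The only delicate point is precisely this containment: it is what turns the computation of $p$ into a clean volume ratio rather than a truncated integral over the intersection of two balls, and it is exactly here that the hypothesis $\|y\|\le r$ (equivalently $\epsilon\le1$) is needed. Everything else is a routine translation between order statistics and binomial counting, so I expect no further obstacle.
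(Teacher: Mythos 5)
Your proof is correct and follows essentially the same route as the paper's: both reduce the order-statistic event $\{f(X_{(\mu+1)})>(1-\epsilon)^2r^2\}$ to the event that the count $\sum_{i=1}^{\lambda}\mathds{1}_{\{f(X_i)\leq(1-\epsilon)^2r^2\}}$, a sum of i.i.d.\ Bernoulli variables, is at most $\mu$. In fact you make explicit a step the paper merely asserts, namely that the Bernoulli parameter equals $(1-\epsilon)^d$, which you justify by checking via the triangle inequality that $\mathcal{B}(y,(1-\epsilon)r)\subseteq\mathcal{B}(0,r)$ so that the probability is a clean volume ratio.
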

\begin{proof}
We have
$ f(X_{(\mu+1)})>(1-\epsilon)r\iff \sum_{i=1}^{\lambda} \mathds{1}_{\{f(X_{i})\leq(1-\epsilon)^2r^2\}}\leq \mu$
since $\mathds{1}_{\{f(X_{i})\leq(1-\epsilon)^{2}r^{2}\}}$ are independent Bernoulli variables of parameter $(1-\epsilon)^{d}$,  hence the result.
\end{proof}
Using Lemma~\ref{lem:bin}, we now get lower and upper bounds on $\mathbb{E}\left[f\left(X_{(\mu+1)}\right)\right]$: 
\begin{thm}\label{thm:noncentered} 
Consider $d>0$, $r>0$, $\lambda>\mu \ge 1$.
The expected value of $f(\bar{X}_{(\mu)})$ satisfies both
\begin{align*}
\mathbb{E}_{X_{1}\ldots X_{\lambda}\sim\mathcal{B}(0,r)}\left[f(\bar{X}_{(\mu)})\right]\leq &4r^2\mathbb{P}_{U\sim B\left(\lambda,(1-\epsilon)^{d}\right)}\left(U\leq \mu\right)\\
&+\frac{r^2 d\times\Gamma(\lambda+1)\Gamma(\mu+1+2/d)}{\mu(d+2)\Gamma(\mu+1)\Gamma(\lambda+1+2/d)}
\end{align*}
$$\mbox{ and ~~}\mathbb{E}_{X_{1}\ldots X_{\lambda}\sim\mathcal{B}(0,r)}\left[f(\bar{X}_{(\mu)})\right]\geq \frac{r^2 d\times\Gamma(\lambda+1)\Gamma(\mu+1+2/d)}{\mu(d+2)\Gamma(\mu+1)\Gamma(\lambda+1+2/d)}.$$
\end{thm}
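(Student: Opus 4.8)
The plan is to condition on the value $h=f(X_{(\mu+1)})$ and to split the analysis at the threshold $h^\star=(1-\epsilon)^2r^2$, according to whether the sub-level set $\{x:f(x)\le h\}=\mathcal{B}(y,\sqrt h)$ is contained in the sampling ball $\mathcal{B}(0,r)$. Containment holds exactly when $\sqrt h\le r-\|y\|=(1-\epsilon)r$, i.e. $h\le h^\star$. On this ``good'' event the conditional law of the selected points $X_{(1)},\dots,X_{(\mu)}$ is uniform on the full ball $\mathcal{B}(y,\sqrt h)$, so by translation Lemma~\ref{lm3} applies verbatim and the conditional regret equals $\frac1\mu\frac{d}{d+2}h$; moreover the per-point sub-level probability is $(\sqrt h/r)^d$, exactly as in the centered case, so the law of $f(X_{(\mu+1)})$ restricted to $[0,h^\star]$ coincides with the one analyzed in Section~\ref{easier}. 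On the complementary ``bad'' event $\{h>h^\star\}$, Lemma~\ref{lem:bin} identifies its probability with $\mathbb{P}_{U\sim B(\lambda,(1-\epsilon)^d)}(U\le\mu)$.

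For the upper bound I would write $\mathbb{E}[f(\bar X_{(\mu)})]=\mathbb{E}[f(\bar X_{(\mu)})\mathbf 1_{\{h\le h^\star\}}]+\mathbb{E}[f(\bar X_{(\mu)})\mathbf 1_{\{h>h^\star\}}]$. By the matching just described, the first term equals its centered counterpart restricted to $h\le h^\star$, hence is bounded above by the full centered expectation, which is exactly the Gamma-function expression of Theorem~\ref{thm:formula_k_best}. For the second term, since $\bar X_{(\mu)}$ is an average of points of the convex set $\mathcal{B}(0,r)$ and $y\in\mathcal{B}(0,r)$, we have $\|\bar X_{(\mu)}-y\|\le 2r$, so $f(\bar X_{(\mu)})\le 4r^2$; multiplying by the bad-event probability from Lemma~\ref{lem:bin} yields the $4r^2\,\mathbb{P}_{U\sim B(\lambda,(1-\epsilon)^d)}(U\le\mu)$ term. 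Adding the two bounds gives the claimed upper bound.

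For the lower bound I would start from the exact conditional bias--variance identity $\mathbb{E}[f(\bar X_{(\mu)})\mid h]=\frac1\mu\,\mathbb{E}_{X\sim\mathcal{U}(R_h)}[\|X-y\|^2]+\frac{\mu-1}\mu\|m_h-y\|^2$, where $R_h=\mathcal{B}(0,r)\cap\mathcal{B}(y,\sqrt h)$ and $m_h$ is its centroid; on the good event $R_h$ is the full ball, $m_h=y$, and this collapses to the centered line $\frac1\mu\frac{d}{d+2}h$. Since the good-event contributions (both the conditional value and the restricted law of $h$, as well as $\mathbb{P}(h\le h^\star)$) coincide with the centered case, the target inequality reduces to showing that the bad-event contribution is at least as large as in the centered case. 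Here I would combine two facts: that $f(X_{(\mu+1)})$ is stochastically larger under the off-center sampling, because the sub-level probability $p_{nc}(h)=\mathrm{vol}(R_h)/\mathrm{vol}(\mathcal{B}(0,r))\le(\sqrt h/r)^d=p_{c}(h)$, and that the truncation of $R_h$ creates a strictly positive centroid bias $\|m_h-y\|^2$ which feeds into the regret through the $\frac{\mu-1}\mu$ factor.

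The main obstacle is precisely this bad-event comparison. Pointwise in $h$ the identity above can fall below the centered line $\frac1\mu\frac{d}{d+2}h$ just past $h^\star$ --- removing the high-$f$ spherical cap lowers the variance term faster (to first order in the cap volume) than the centroid bias raises it (second order) --- so no term-by-term domination is available and the inequality must be established globally, balancing the reduced conditional value against the stochastically larger threshold $h$. The degenerate case $\mu=1$, where the centroid-bias term vanishes, is cleanest to treat separately: there $\bar X_{(\mu)}=X_{(1)}$ and the lower bound follows directly from the stochastic dominance $p_{nc}\le p_{c}$ applied to the minimum $f(X_{(1)})$, recovering Lemma~\ref{lm5} as the centered value. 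The remaining steps --- the Gamma-function simplification and the identification of the binomial tail --- are immediate from the earlier lemmas.
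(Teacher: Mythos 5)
Your upper bound is correct and follows the paper's own proof almost line by line: the same decomposition over the events $\{f(X_{(\mu+1)})\lessgtr(1-\epsilon)^{2}r^{2}\}$, the same $4r^2$ bound from the diameter of $\mathcal{B}(0,r)$, the identification of the bad-event probability via Lemma~\ref{lem:bin}, and the domination of the good-event term by the centered formula of Theorem~\ref{thm:formula_k_best}. If anything, you justify the last step (the law of $f(X_{(\mu+1)})$ and the conditional regret of Lemma~\ref{lm3} both coincide with the centered case on $[0,(1-\epsilon)^2r^2]$) more explicitly than the paper, which only lists the three inequalities.

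The lower bound is where you have a genuine gap, and you say so yourself: you reduce the claim to showing that the bad-event contribution dominates its centered counterpart, correctly observe that pointwise-in-$h$ domination fails (truncating the cap lowers the variance term at first order while the centroid bias only enters at second order), prove the case $\mu=1$ by stochastic dominance, and stop. To be fair, the paper is hardly better here --- it dismisses the lower bound with one sentence ("deduced from the centered case as in the previous section"), and your first-order/second-order remark shows that this sentence hides a real argument. The missing idea that closes your own reduction is a rearrangement (bathtub) step applied to your bias--variance identity, uniformly in $h$ rather than term by term. Drop the bias term: $\mathbb{E}\bigl[f(\bar X_{(\mu)})\mid f(X_{(\mu+1)})=h\bigr]\ge\frac1\mu\mathbb{E}_{X\sim\mathcal{U}(R_h)}\|X-y\|^2$, where $R_h=\mathcal{B}(0,r)\cap\mathcal{B}(y,\sqrt h)$. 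Among all sets of the same volume as $R_h$, the ball centered at $y$ minimizes $\mathbb{E}\|X-y\|^2$: if $\rho$ is defined by $\mathrm{vol}(\mathcal{B}(y,\rho))=\mathrm{vol}(R_h)$, then every point of $R_h\setminus\mathcal{B}(y,\rho)$ has $\|x-y\|\ge\rho$, every point of $\mathcal{B}(y,\rho)\setminus R_h$ has $\|x-y\|\le\rho$, and these two sets have equal volume, so $\mathbb{E}_{\mathcal{U}(R_h)}\|X-y\|^2\ge\mathbb{E}_{\mathcal{U}(\mathcal{B}(y,\rho))}\|X-y\|^2=\frac{d}{d+2}\rho^2$ by Lemma~\ref{lem:ball_volume}. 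Now set $g(h)=\rho^2(h)$, i.e. $V_d\,g(h)^{d/2}=\mathrm{vol}(R_h)$ with $V_d$ the unit-ball volume; $g$ is nondecreasing, and for a single sample $X\sim\mathcal{B}(0,r)$ one has $\mathbb{P}\bigl(g(f(X))\le t\bigr)=\mathrm{vol}(R_{g^{-1}(t)})/(V_dr^d)=(t/r^2)^{d/2}$, which is exactly the centered-case law of $f(X)$. Since $g$ is nondecreasing, $g(f(X_{(\mu+1)}))$ therefore has the centered-case law of $f(X_{(\mu+1)})$, and writing $\mathbb{E}_{\mathrm{nc}}$, $\mathbb{E}_{\mathrm{c}}$ for the off-center and centered settings,
\begin{align*}
\mathbb{E}_{\mathrm{nc}}\bigl[f(\bar X_{(\mu)})\bigr]\;\ge\;\frac1\mu\frac{d}{d+2}\,\mathbb{E}_{\mathrm{nc}}\bigl[g\bigl(f(X_{(\mu+1)})\bigr)\bigr]\;=\;\frac1\mu\frac{d}{d+2}\,\mathbb{E}_{\mathrm{c}}\bigl[f(X_{(\mu+1)})\bigr]\;=\;\mathbb{E}_{\mathrm{c}}\bigl[f(\bar X_{(\mu)})\bigr],
\end{align*}
the last equality being the tower-property computation in the proof of Theorem~\ref{thm:formula_k_best}, whose right-hand side is the stated Gamma expression. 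This is precisely the "global" comparison you were looking for: it trades the variance lost by truncation against the stochastically larger threshold in a single inequality. Without this (or an equivalent) argument, the lower bound remains unproven in your proposal for $\mu\ge2$.
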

\begin{proof}
\begin{align*}
\mathbb{E}\left[f(\bar{X}_{(\mu)})\right]  
&= \mathbb{E}\left(f(\bar{X}_{(\mu)})|f(X_{(\mu+1)})\geq (1-\epsilon)^{2}r^{2}\right)\mathbb{P}\left(f(X_{(\mu+1)})\geq (1-\epsilon)^{2}r^{2}\right)\\
&+\mathbb{E}\left(f(\bar{X}_{(\mu)})|f(X_{(\mu+1)})< (1-\epsilon)^{2}r^{2}\right)\mathbb{P}\left(f(X_{(\mu+1)})<(1-\epsilon)^2r^2\right).
 \end{align*}
In this Bayes decomposition, we can bound the various terms as follows:  \begin{eqnarray*}
\mathbb{E}\left(f(\bar{X}_{(\mu)})|f(X_{(\mu+1)})\geq (1-\epsilon)^2r^2\right)&\leq& 4r^2,\\
\mathbb{P}\left(f(X_{(\mu+1)})\geq (1-\epsilon)^2r^2\right)&\leq &1,\\ \mathbb{E}\left[f(\bar{X}_{(\mu)})|f(X_{(\mu+1)})< (1-\epsilon)^2r^2\right]&\leq&\frac{r^2 d\times\Gamma(\lambda+1)\Gamma(\mu+1+2/d)}{\mu(d+2)\Gamma(\mu+1)\Gamma(\lambda+1+2/d)}.
\end{eqnarray*}
Combining these equations yields the first (upper) bound.
The second (lower) bound is deduced from the centered case (i.e. when the distribution is centered on the optimum) as in the previous section.
\end{proof}
Figure~\ref{fig:exp_th_nc} gives an illustration of the bounds. Until $\mu\simeq (1-\epsilon)^{d}\lambda$, the centered and non centered case coincide when $\lambda\rightarrow\infty$: in this case, we can have a more precise asymptotic result for the choice of $\mu$.
\begin{thm}\label{thm:asymp}
Consider $d>0$, $r>0$ and $y\in\R^d$.
Let $\epsilon = \frac{||y||}{r}\in [0,1)$ and $f(x) = ||x-y||^2$. When using $\mu=\lfloor c\lambda\rfloor$ with $0<c<(1-\epsilon)^{d}$,
we get as $\lambda\to\infty$, for a fixed $d$,
$$\mathbb{E}_{X_1...X_\lambda\sim\mathcal{B}(0,r)}\left[ f(\bar X_{(\mu)})\right]= \frac{dr^2c^{2/d-1}}{(d+2)\lambda}+o\left(\frac1\lambda\right).$$
\end{thm}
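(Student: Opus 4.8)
The plan is to sandwich the expectation between the two bounds of Theorem~\ref{thm:noncentered} and show that both collapse to the stated asymptotic. Write $M(\lambda,\mu)$ for the common ``centered'' term $\frac{r^2 d\,\Gamma(\lambda+1)\Gamma(\mu+1+2/d)}{\mu(d+2)\Gamma(\mu+1)\Gamma(\lambda+1+2/d)}$, which appears as the whole lower bound and as the second summand of the upper bound. The lower bound then reads $\mathbb{E}[f(\bar X_{(\mu)})]\ge M(\lambda,\mu)$ and the upper bound $\mathbb{E}[f(\bar X_{(\mu)})]\le 4r^2\,\mathbb{P}_{U\sim B(\lambda,(1-\epsilon)^d)}(U\le\mu)+M(\lambda,\mu)$, so it suffices to prove (i) $M(\lambda,\lfloor c\lambda\rfloor)=\frac{dr^2c^{2/d-1}}{(d+2)\lambda}+o(1/\lambda)$ and (ii) that the binomial tail term is $o(1/\lambda)$.

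For step (i) I would simply invoke Corollary~\ref{cor:equiv}: since $\mu=\lfloor c\lambda\rfloor\to\infty$ as $\lambda\to\infty$ (because $c>0$), the second asymptotic there gives $M(\lambda,\mu)\sim r^2\frac{d}{d+2}\frac{\mu^{2/d-1}}{\lambda^{2/d}}$. Substituting $\mu=\lfloor c\lambda\rfloor=c\lambda\,(1+O(1/\lambda))$ and using that $(\mu/(c\lambda))^{2/d-1}\to1$, this equals $\frac{d}{d+2}r^2c^{2/d-1}\lambda^{-1}(1+o(1))$, which is exactly $\frac{dr^2c^{2/d-1}}{(d+2)\lambda}+o(1/\lambda)$. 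If one prefers to be self-contained, the same conclusion follows by plugging Stirling's formula for $\Gamma$ directly into $M(\lambda,\mu)$, exactly as in the proof of Corollary~\ref{cor:equiv}.

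Step (ii) is where the hypothesis $c<(1-\epsilon)^d$ is used. The binomial $U\sim B(\lambda,p)$ with $p=(1-\epsilon)^d$ has mean $\lambda p$, and we are evaluating $\mathbb{P}(U\le\mu)$ with $\mu/\lambda\to c<p$, i.e. a lower-tail deviation by a fixed fraction $p-c>0$. A Chernoff/Hoeffding bound then yields $\mathbb{P}(U\le\lfloor c\lambda\rfloor)\le\exp(-2\lambda(p-c)^2)$ for all large $\lambda$, which decays exponentially and is in particular $o(1/\lambda)$; hence $4r^2\,\mathbb{P}(U\le\mu)=o(1/\lambda)$.

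Combining the two steps, the lower bound gives $\mathbb{E}[f(\bar X_{(\mu)})]\ge\frac{dr^2c^{2/d-1}}{(d+2)\lambda}+o(1/\lambda)$ and the upper bound gives $\mathbb{E}[f(\bar X_{(\mu)})]\le\frac{dr^2c^{2/d-1}}{(d+2)\lambda}+o(1/\lambda)$, so the two coincide and the theorem follows. I expect the only delicate point to be step (ii): one must ensure the concentration estimate is genuinely exponential, so that it dominates the $1/\lambda$ main term, which is precisely what the strict inequality $c<(1-\epsilon)^d$ guarantees; the floor in $\mu=\lfloor c\lambda\rfloor$ is harmless, since it perturbs $c$ only by $O(1/\lambda)$ and keeps $\mu/\lambda$ bounded away from $p$.
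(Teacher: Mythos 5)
Your proof is correct and follows essentially the same route as the paper's: sandwiching via the two bounds of Theorem~\ref{thm:noncentered}, applying Corollary~\ref{cor:equiv} to the centered term with $\mu=\lfloor c\lambda\rfloor\to\infty$, and using Hoeffding's inequality with the hypothesis $c<(1-\epsilon)^d$ to show the binomial tail is $o(1/\lambda)$. You merely spell out the details (the explicit Chernoff exponent and the harmlessness of the floor) that the paper leaves implicit.
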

\begin{proof}
Let $\mu_\lambda=\lfloor c\lambda\rfloor$ with $0<c<(1-\epsilon)^{d}$. We  immediately have from Hoeffding's concentration inequality: $$\mathbb{P}_{U\sim B\left(\lambda,(1-\epsilon)^d\right)}\left(U\leq \mu_\lambda\right)\in o(\frac{1}{\lambda})$$ when $\lambda \rightarrow \infty$.
From Corollary~\ref{cor:equiv}, we also get: $$\frac{r^2 d\times\Gamma(\lambda+1)\Gamma(\mu_\lambda+1+2/d)}{\mu_\lambda(d+2)\Gamma(\mu_\lambda+1)\Gamma(\lambda+1+2/d)}\sim \frac{d\,r^2 c^{2/d-1}}{(d+2)\lambda}.$$
Using the inequalities of Theorem~\ref{thm:noncentered}, we obtain the desired result.
\end{proof}
The result of Theorem~\ref{thm:asymp} shows that a convergence rate $O(\lambda^{-1})$ can be attained for the $\mu$-best approach with $\mu>1$. The rate for $\mu=1$ is $\Theta(\lambda^{-2/d})$, proving that the $\mu$-best approach leads asymptotically to a better estimation of the optimum. 
If we consider the problem $\min_\mu\max_{y:||y||\leq \epsilon r}\mathbb{E}\left[f_y(\bar{X}_{(\mu)})\right]$ with $f_y$ the objective function $x\mapsto ||x-y||^2$, then $\mu=\lfloor c\lambda\rfloor$ with $0<c<(1-\epsilon)^{d}$ achieves the  $O\left(\lambda^{-1}\right)$ progress rate. 

All the results we proved in this section are easily extendable to strongly convex quadratic functions. For larger class of functions, it is  less immediate, and left as future work.
\subsection{Using quasi-convexity}\label{qc}
\begin{figure}[t]
    \centering
    \includegraphics[width=.45\textwidth]{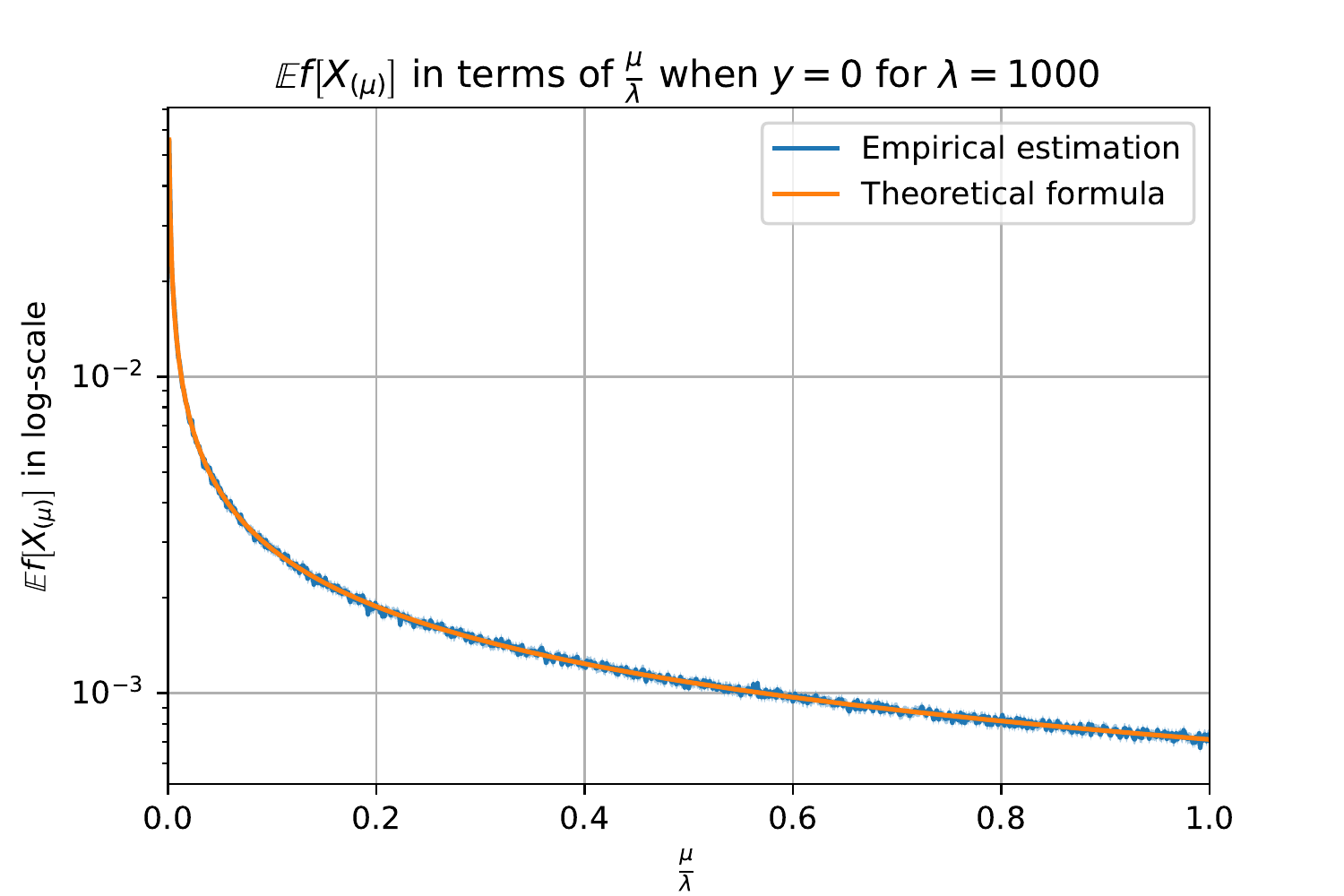}
    \caption{Centered case: validation of the theoretical formula for $\mathbb{E}_{X_1...X_\lambda\sim\mathcal{B}(0,r)}\left[ f(\bar X_{(\mu)})\right]$ when $y=0$ from Theorem~\ref{thm:formula_k_best} for $d=5$, $\lambda=1000$ and $R=1$. $1000$  samples have been drawn to estimate the expectation. \otc{The two curves overlap, showing agreement between theory and practice.}}
    \label{fig:exp_th_c}
\end{figure}

The method above was designed for the sphere function, yet its adaptation to other quadratic convex functions is straightforward. On the other hand, our reasoning might break down when applied to 
multimodal functions.
We thus consider an adaptive strategy to define $\mu$.  A desirable property to a $\mu$-best approach is that the level-sets of the functions are convex.  \otc{A simple workaround is to choose $\mu$ maximal such that there is a quasi-convex function which is identical to $f$ on $\{X_{(1)},\dots,X_{(\mu)}\}$.}
If the objective function is quasi-convex on the convex hull of $\{X_{(1)},\dots,X_{(\tilde{\mu})}\}$ with $\tilde{\mu} \le \lambda$, then: for any $i \le \tilde{\mu}$,
$X_{(i)}$ is on the frontier (denoted $\partial$) of the convex hull of $\{X_{(1)},\dots,X_{(i)}\}$ and the value $$h=\max \left\{ i\in [1,\lambda], \forall j\leq i, X_{(j)}\in \partial\left[\texttt{ConvexHull}(X_{(1)},\dots,X_{(j)})\right]\right\}$$ verifies $h \geq \tilde \mu$ so that $\mu=\min(h,\tilde\mu)$ is actually equal to $\tilde \mu$.
\otc{As a result:
\begin{itemize}
    \item in the case of the sphere function, or any quasi-convex function, if we set $\tilde \mu=\lfloor \lambda (1-\epsilon)^d\rfloor$, using $\mu=\min(h,\tilde \mu)$ leads to the same value of $\mu=\tilde \mu = \lfloor \lambda (1-\epsilon)^d\rfloor$. In particular, we preserve the theoretical guarantees of the previous sections for the sphere function $x\mapsto ||x-y||^2$.
    \item if the objective function is not quasi-convex, we can still compute the quantity $h$ defined above, but we might get a $\mu$ smaller than $\tilde \mu$. However, this strategy remains meaningful at it prevents from keeping too many points when the function is  ``highly'' non-quasi-convex.
\end{itemize}
}

\section{Experiments}
\begin{figure}[t]
    \centering
    \includegraphics[width=.45\textwidth]{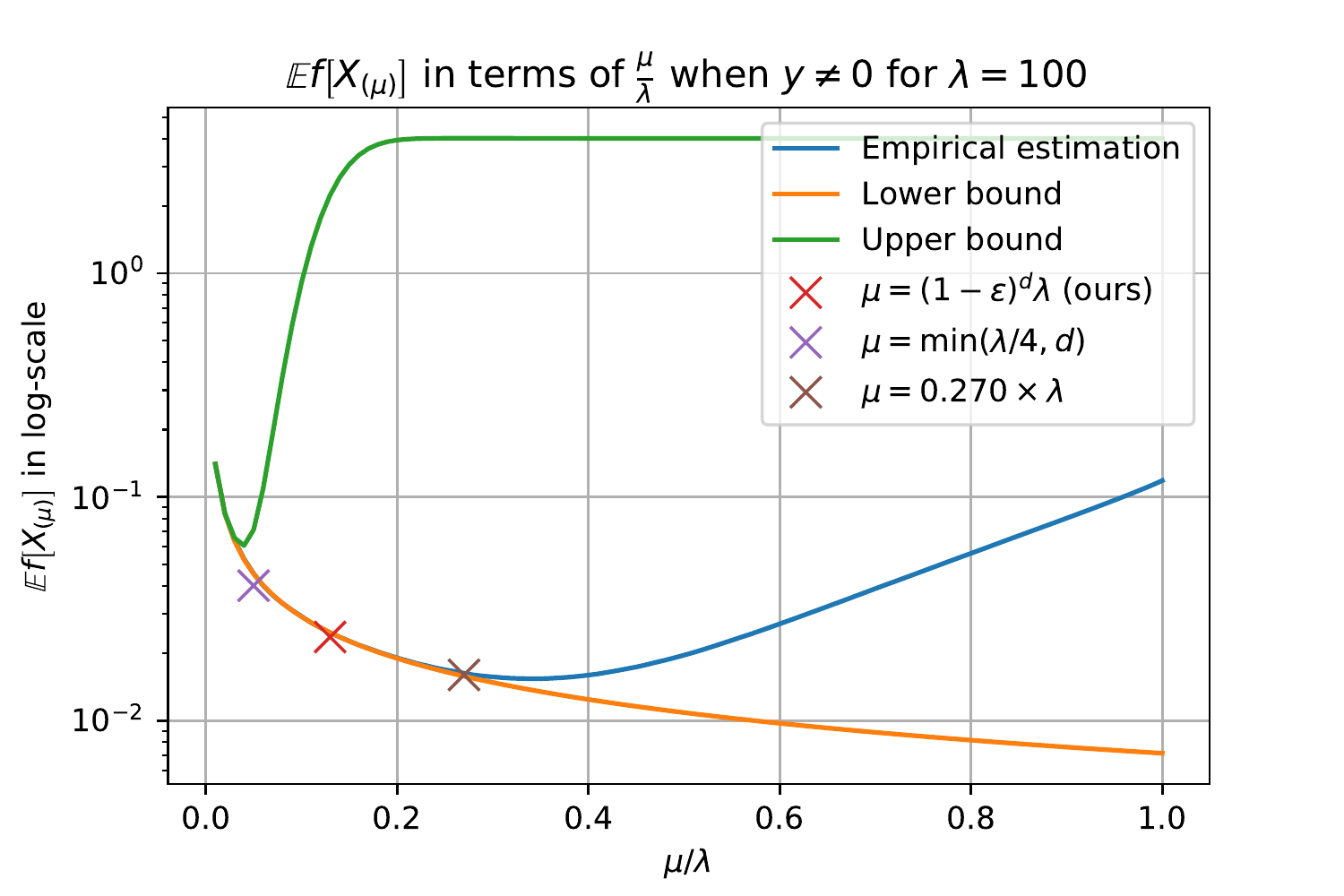}
    \includegraphics[width=.45\textwidth]{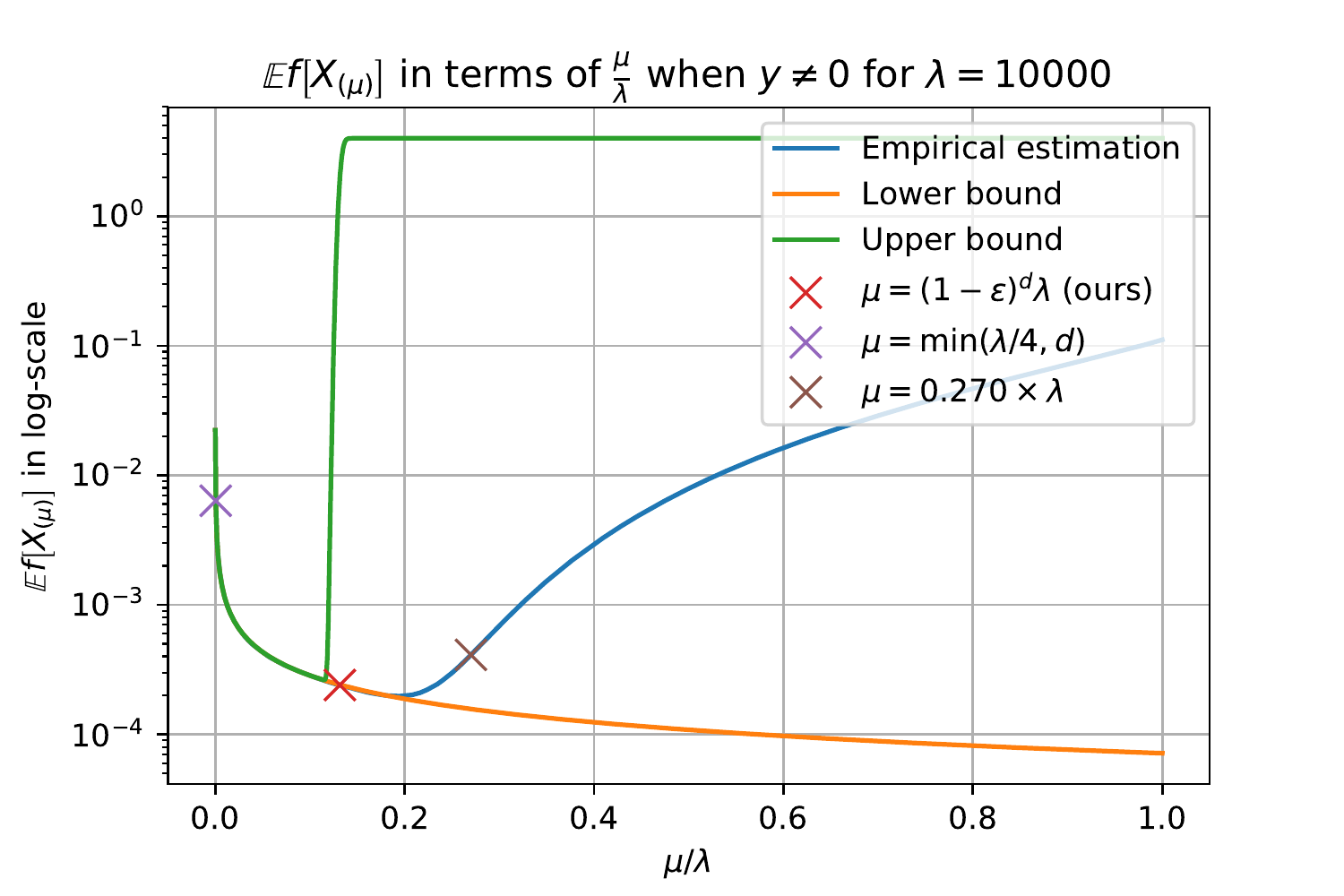}
    \caption{Non centered  case: validation of the theoretical bounds for $\mathbb{E}_{X_1...X_\lambda\sim\mathcal{B}(0,r)}\left[ f(\bar X_{(\mu)})\right]$ when $||y||=\frac{R}{3}$ (i.e. $\epsilon=\frac{1}{3}$) from Theorem~\ref{thm:noncentered} for $d=5$ and $R=1$. We implemented $\lambda=100$ and $\lambda=10000$. $10000$  samples have been drawn to estimate the expectation. We see that such a value for $\mu$ is a good approximation of the minimum of the empirical values\otc{: we can thus recommend $\mu=\lfloor \lambda (1-\epsilon)^d\rfloor$ when $\lambda\rightarrow\infty$.} We also added some classical choices of values for $\mu$ from literature: when $\lambda\rightarrow\infty$, our method performs the best.}
    \label{fig:exp_th_nc}
\end{figure}

To validate our theoretical findings, we first compare the formulas obtained in Theorems~\ref{thm:formula_k_best} and~\ref{thm:noncentered} with their empirical estimates. We then perform larger scale experiments in a one-shot optimization setting. 

\subsection{Experimental validation of theoretical formulas}
Figure~\ref{fig:exp_th_c} compares the theoretical formula from Theorem~\ref{thm:formula_k_best} and its empirical estimation: we note that the results coincide and validate our formula. Moreover, the plot confirms that taking the $\mu$-best points leads to a lower regret than the $1$-best approach.

We also compare in Figure~\ref{fig:exp_th_nc} the theoretical bounds from Theorem~\ref{thm:noncentered} with their empirical estimates. We remark that for $\mu\leq (1-\epsilon)^d\lambda$ the convergence of the two bounds to $\mathbb{E}(f(\bar{X}_{(\mu)}))$ is fast. There exists a transition phase around $\mu\simeq (1-\epsilon)^d\lambda$ on which the regret is reaching a minimum: thus, one needs to choose $\mu$ both small enough to reduce bias and large enough to reduce variance. We compared to other empirically estimated values for $\mu$ from~\cite{escompr,HAN,cmsa}. It turns out that if the population is large, our formula for $\mu$ leads to a smaller regret. Note that our strategy assumes that $\epsilon$ is known, which is not the case in practice. It is interesting to note that if the center of the distribution and the optimum are close (i.e. $\epsilon$ is small), one can choose a larger $\mu$ to get  a lower variance on the estimator of the optimum.

\subsection{One-shot optimization in Nevergrad}

\begin{figure}[t]
\centering
\includegraphics[trim={0 0 0 2}, clip, width=.47\textwidth]{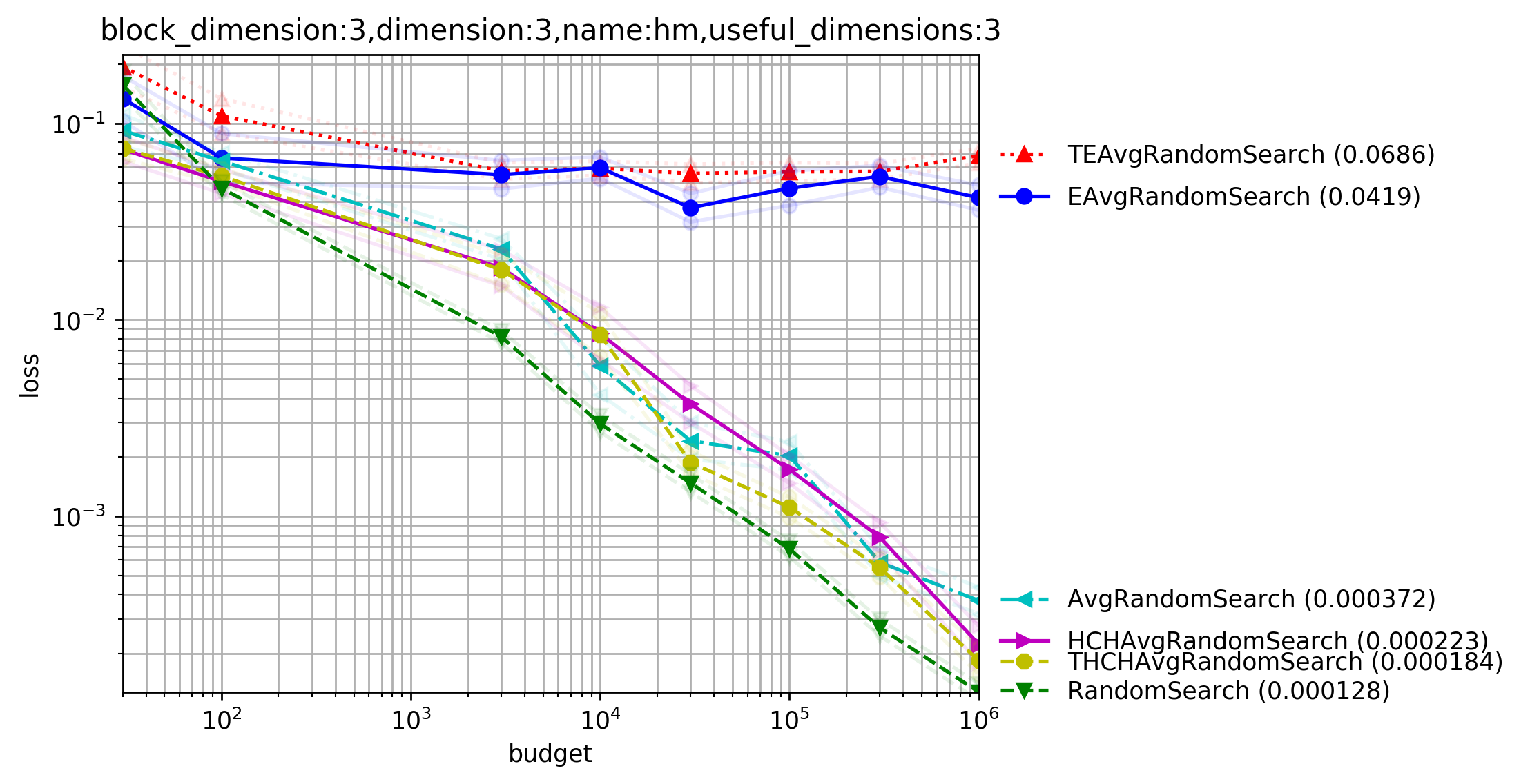}
\includegraphics[trim={0 0 0 2}, clip, width=.47\textwidth]{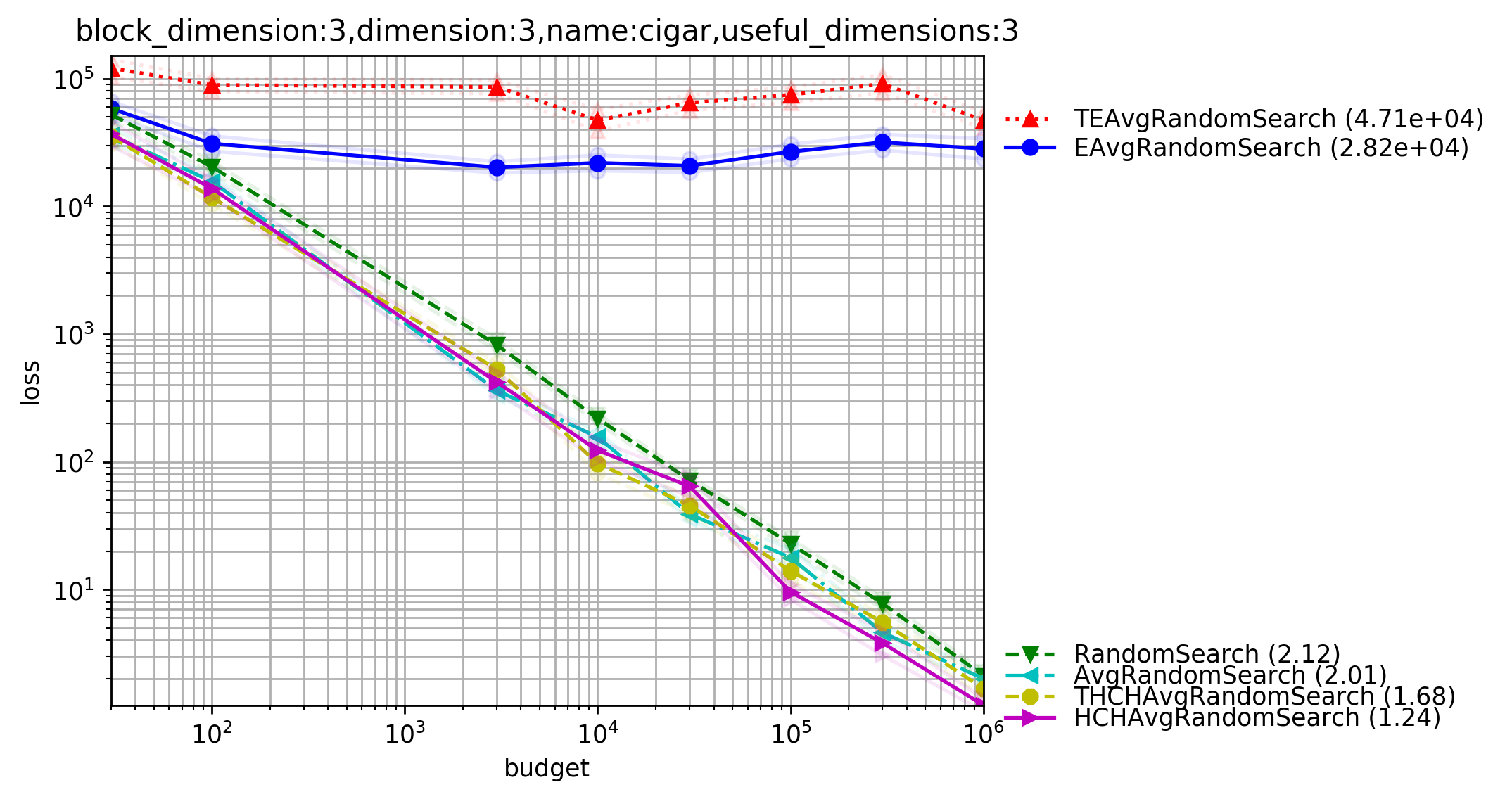}
\includegraphics[trim={0 0 0 20}, clip, width=.47\textwidth]{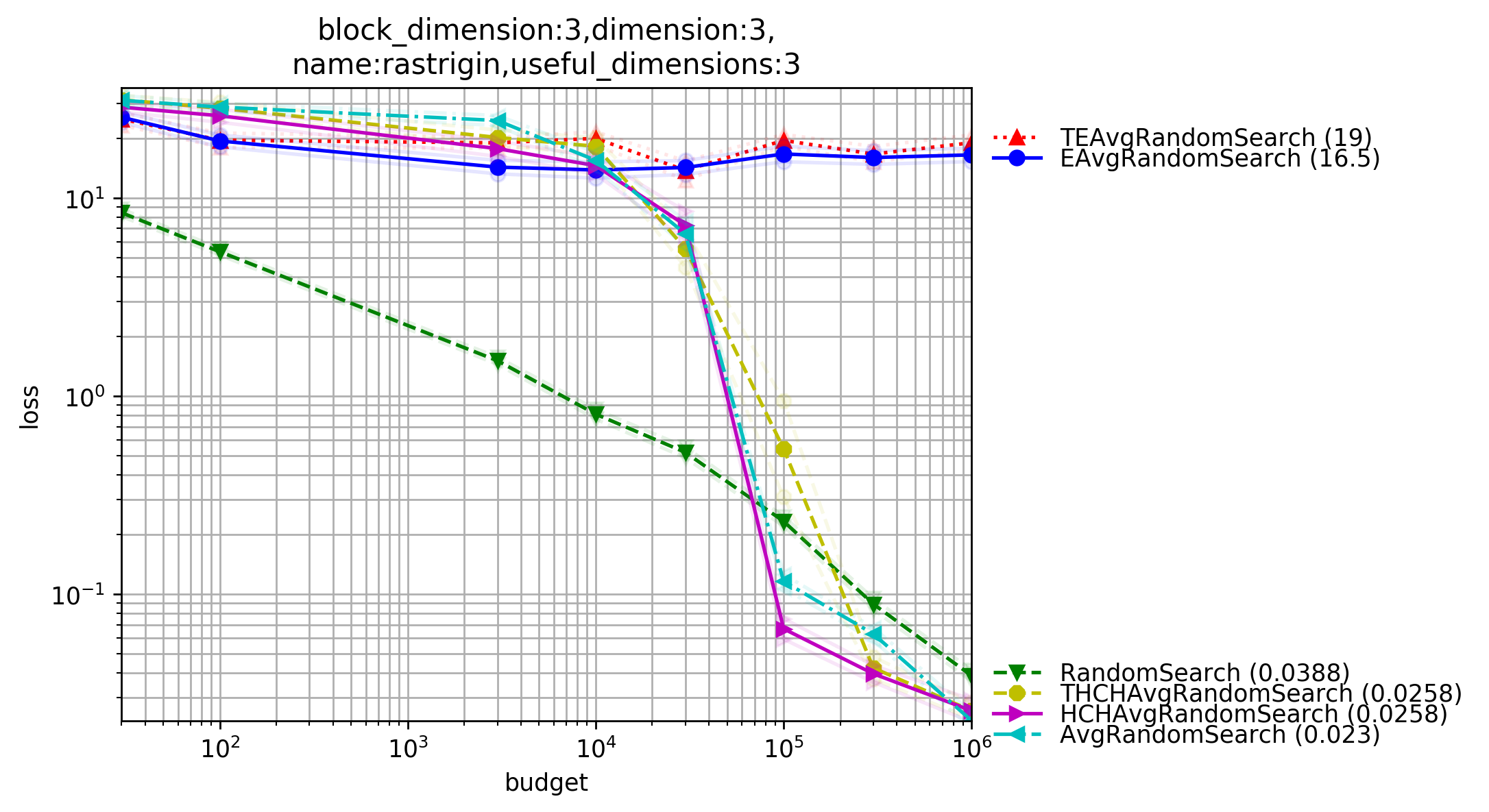}
\includegraphics[trim={0 0 0 20}, clip, width=.47\textwidth]{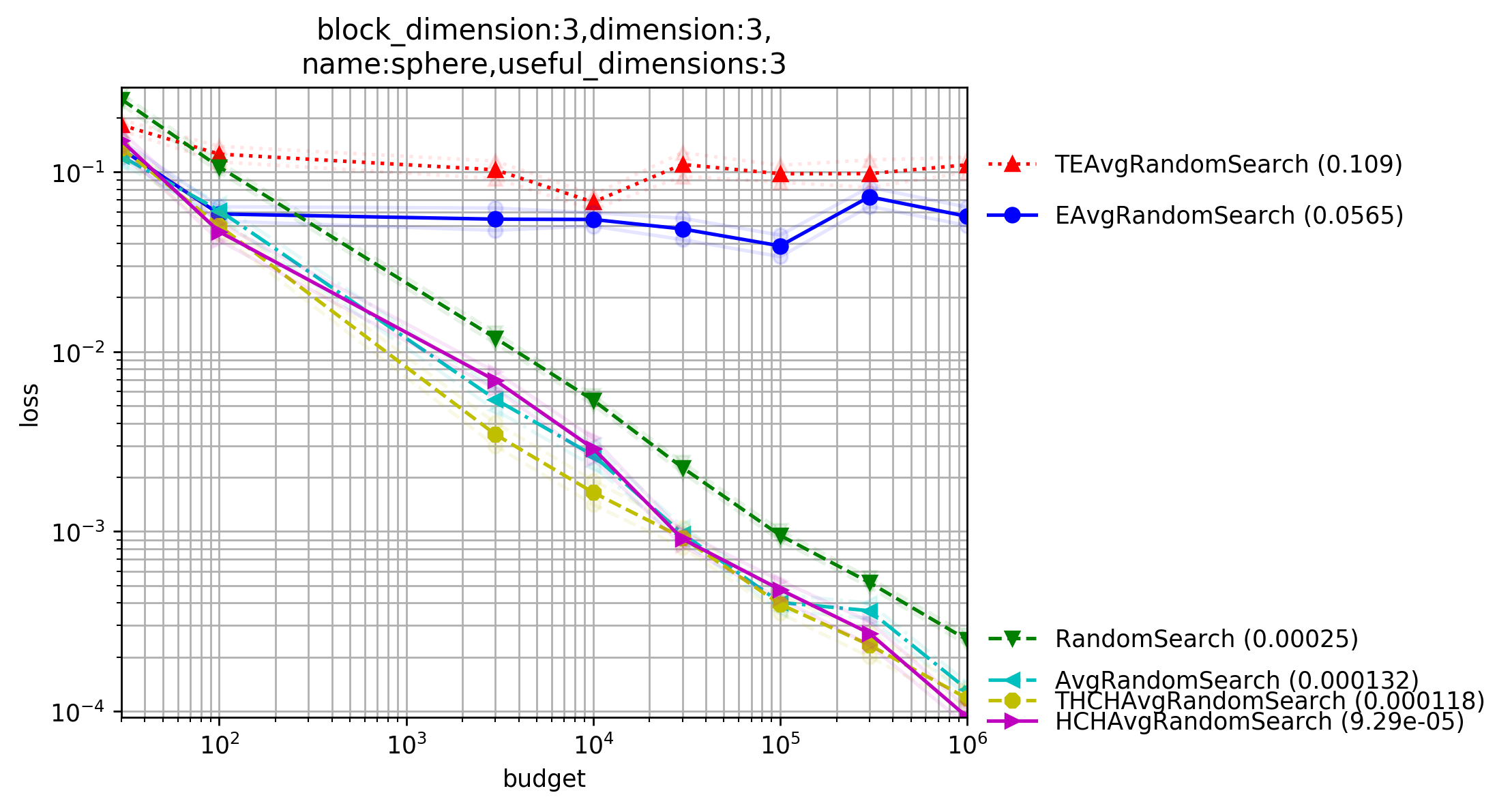}\caption{\label{nevergrad}Experimental curves comparing various methods for choosing $\mu$ as a function of $\lambda$ in dimension $3$. Standard deviations are shown by lighter lines (close to the average lines). Each x-axis value is computed independently. Our proposed formulas \texttt{HCHAvg} and \texttt{THCHAvg} perform well overall. See Fig. \ref{nevergrad25} for results in dimension 25.}
\end{figure}

In this section we test different formulas and variants for the choice of $\mu$ for a larger scale of experiments in the one-shot setting. Equations \ref{eqbeg}-\ref{eqlast} present the different formulas for $\mu$ used in our comparison.
\begin{figure}[t]
\centering
\includegraphics[trim={0 0 0 0}, clip, width=.48\textwidth]{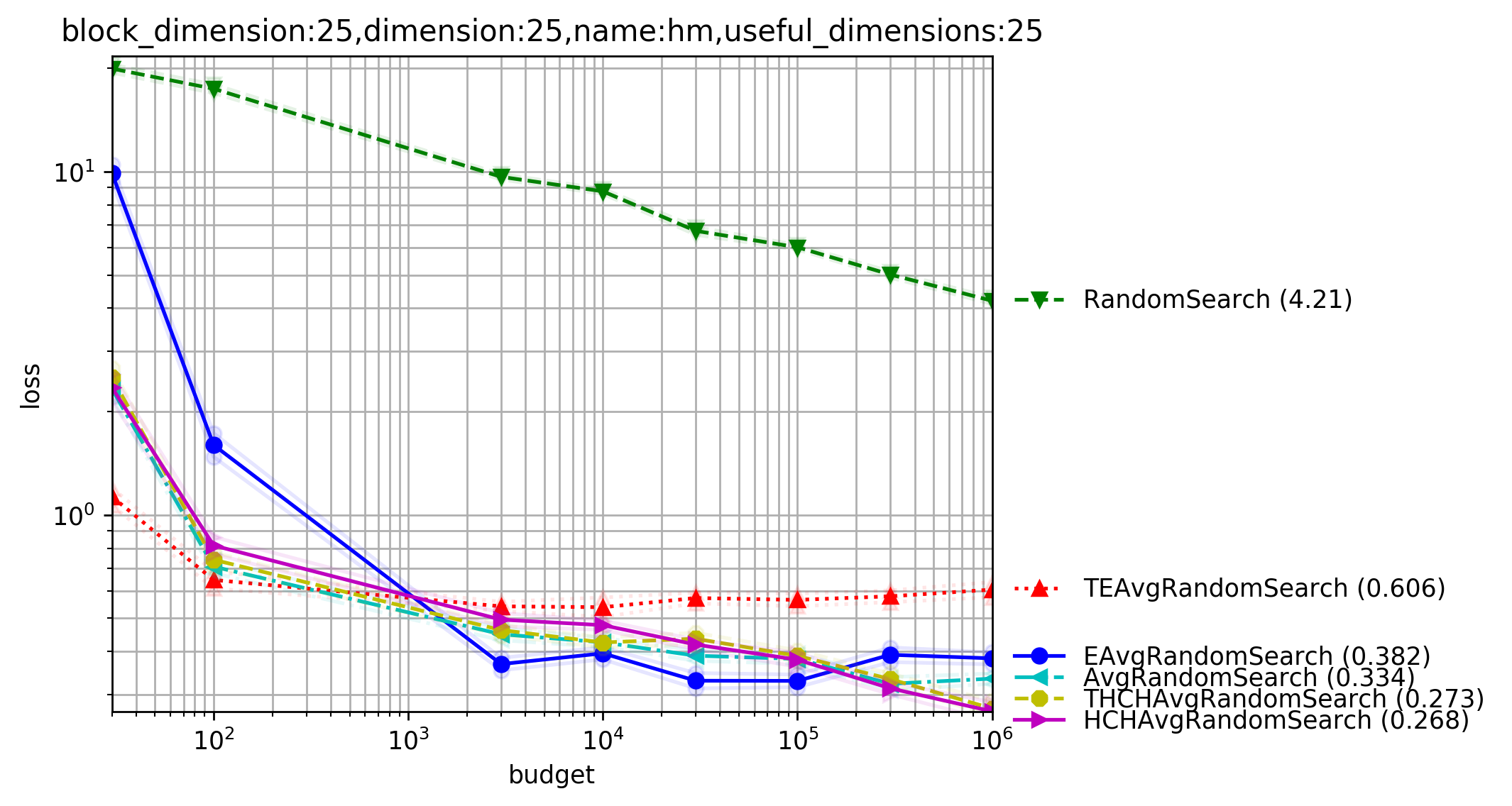}
\includegraphics[trim={0 0 0 20}, clip, width=.48\textwidth]{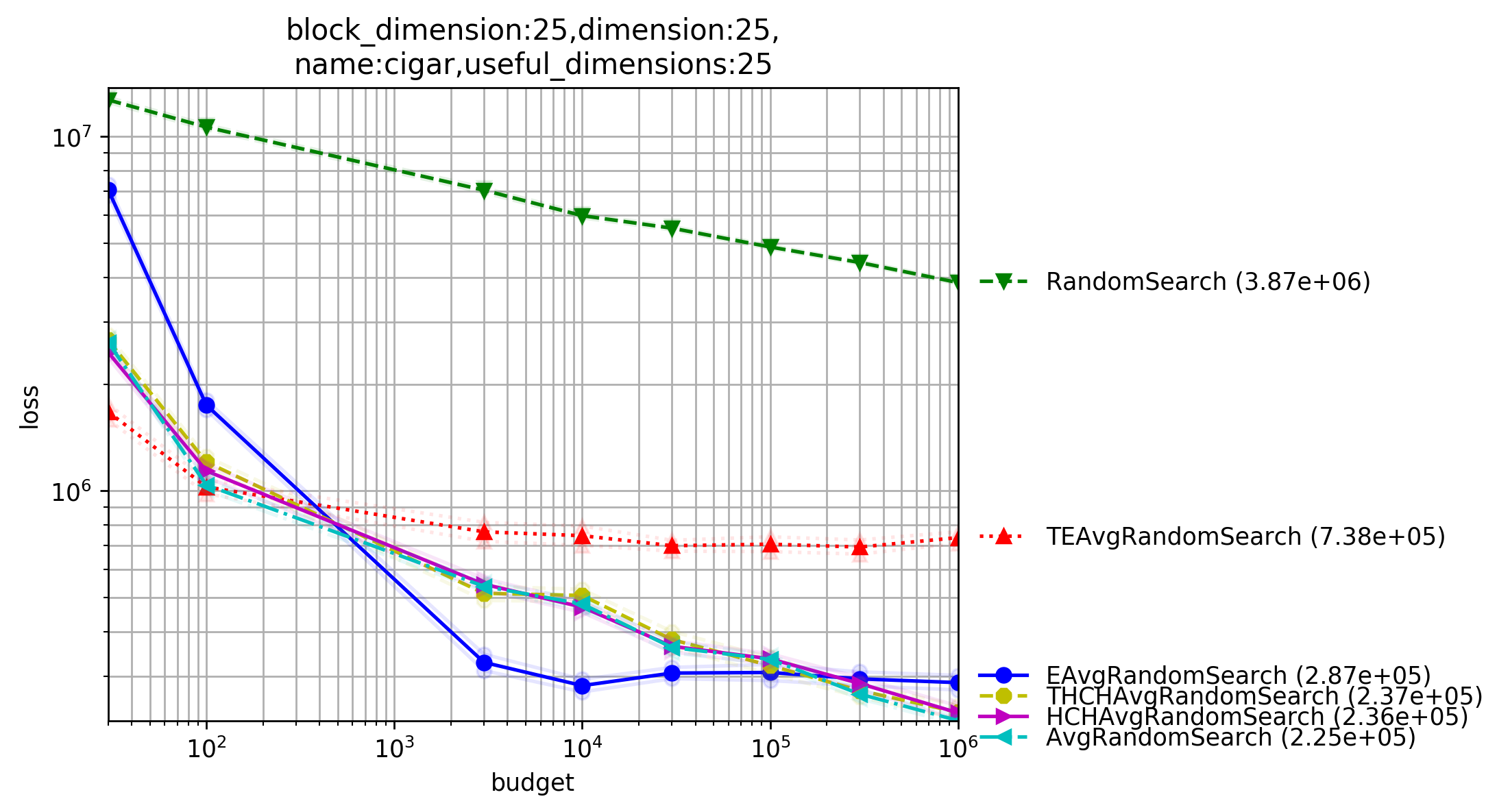}
\includegraphics[trim={0 0 0 20}, clip, width=.48\textwidth]{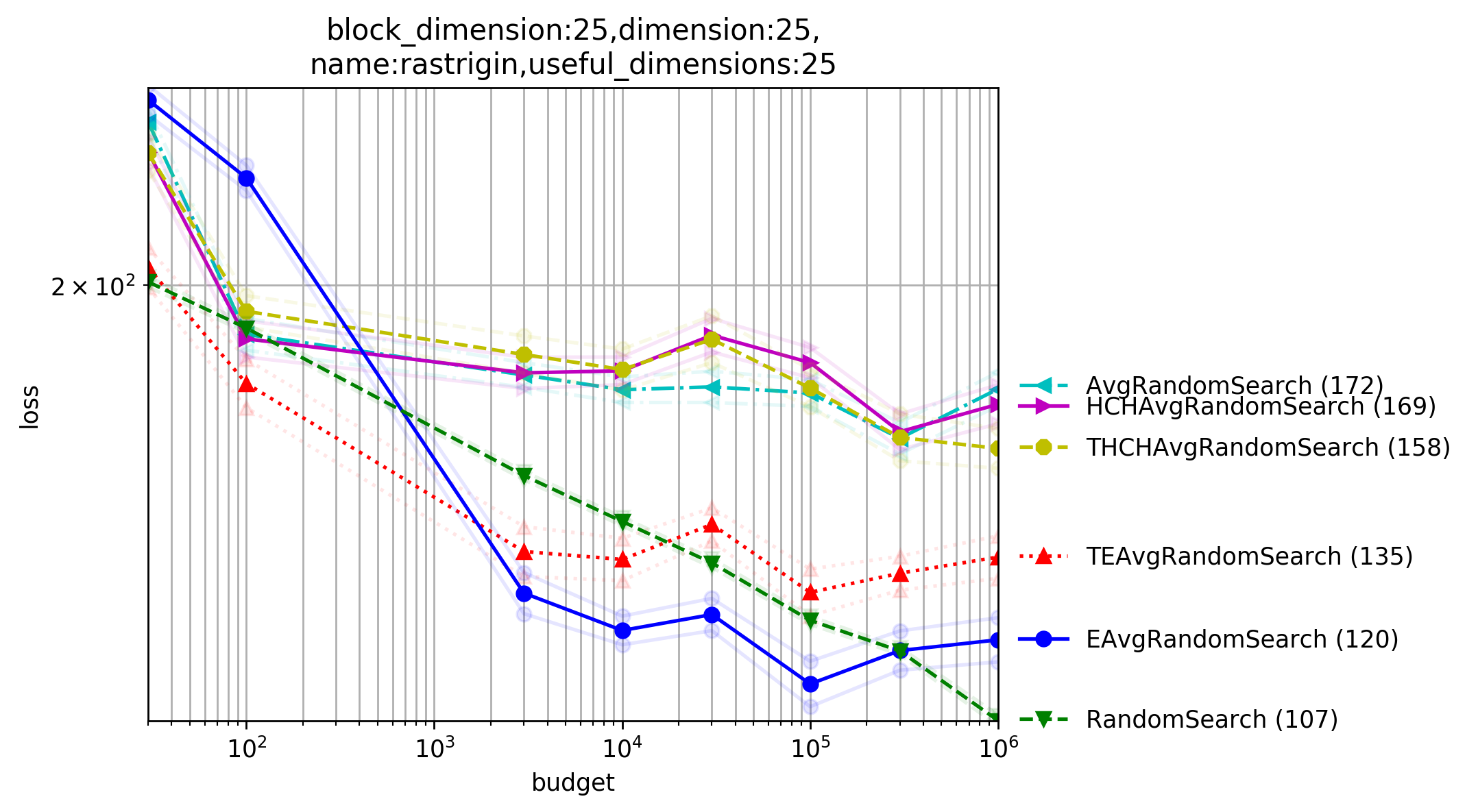}
\includegraphics[trim={0 0 0 20}, clip, width=.48\textwidth]{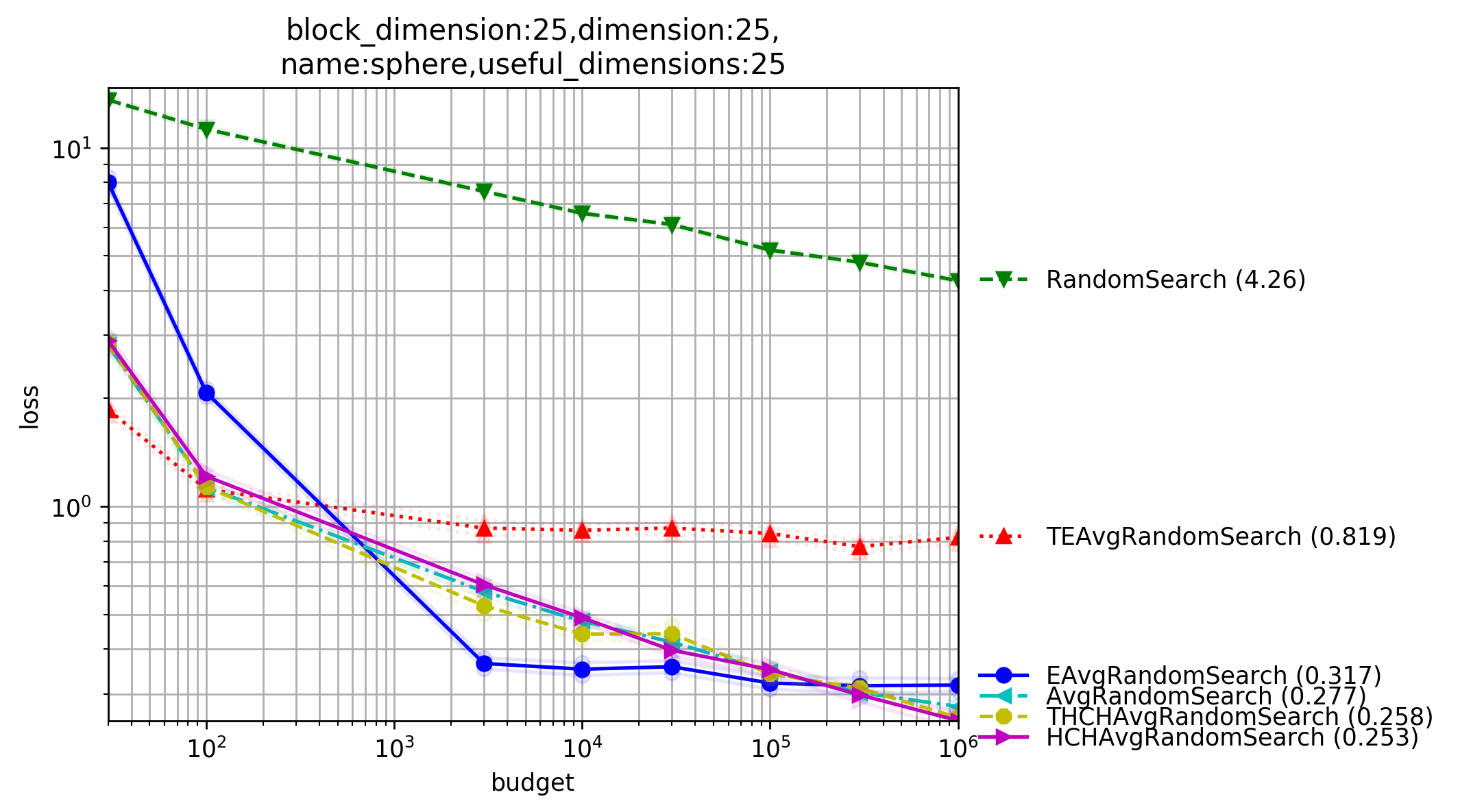}\caption{\label{nevergrad25}Experimental curves comparing various methods for choosing $\mu$ as a function of $\lambda$ in dimension $25$ (Fig. \ref{nevergrad}, continued for dimension 25; see Fig. \ref{nevergrad200} for dimension $200$). Our proposals lead to good results but we notice that they are outperformed by \texttt{TEAvg} and \texttt{EAvg} for Rastrigin: it is better to not take into account non-quasi-convexity because the overall shape is more meaningful that local ruggedness. This phenomenon does not happen for the more rugged HM (Highly Multimodal) function. It also does not happen in dimension 3 or dimension 200 (previous and next figures): in those cases, THCH performed best. Confidence intervals shown in lighter color (they are quite small, and therefore they are difficult to notice).}
\end{figure}
\begin{footnotesize}
\begin{align}
    \mu & = 1 &\mbox{No prefix}\label{eqbeg} \\
    \mu & = \texttt{clip}\left(1, d,\frac{\lambda}{4}\right) &\mbox{Prefix: \texttt{Avg} (averaging)}\label{fabienteytaud}\\
    \mu & = \texttt{clip}\left(1, \infty, \frac{\lambda}{1.1^d}\right) &\mbox{Prefix: \texttt{EAvg} (Exp. Averaging)}\label{eqp1}\\
\mu & = \texttt{clip}\left(1, \min\left(h,\frac{\lambda}{4}\right), d+\frac{\lambda}{1.1^d}\right) &\mbox{Prefix: \texttt{HCHAvg} ($h$ from Convex Hull)}\label{eqmid}\\
    \mu & = \texttt{clip}\left(1, \infty, \frac{\lambda}{1.01^d}\right) &\mbox{Prefix: \texttt{TEAvg} (Tuned Exp. Avg)}\label{eqproved}\\
    \mu & = \texttt{clip}\left(1, \min\left(h,\frac{\lambda}{4}\right), d+\frac{\lambda}{1.01^d}\right) &\mbox{Prefix: \texttt{THCHAvg} (Tuned HCH Avg)}\label{eqlast}
\end{align}\end{footnotesize}
    where $\texttt{clip}(a,b,c)=\max(a,\min(b,c))$ is the projection of $c$ in $[a,b]$ and $h$ is the maximum $i$ such that, for all $j\leq i$, $X_{(j)}$ is on the frontier of the convex hull of $\{X_{(1)},\dots,X_{(j)}\}$ (Sect. \ref{qc}).
Equation \ref{eqbeg} is the naive recommendation ``pick up the best so far''.
Equation \ref{fabienteytaud} existed before the present work: it was, until now, the best rule~\cite{ratio} , overall, in the Nevergrad platform. Equations \ref{eqp1} and \ref{eqproved} are the proposals we deduced from Theorem \ref{thm:asymp}: asymptotically on the sphere, they should have a better rate than Equation \ref{eqbeg}.  Equations \ref{eqmid} and \ref{eqlast} are counterparts of Equations \ref{eqp1} and \ref{eqproved} that combine the latter formulas with ideas from~\cite{ratio}.
 Theorem \ref{thm:asymp} remains true if we add to $\mu$ some constant depending on $d$ so we fine tune our theoretical equation (Eq.~\ref{eqp1}) with the one provided by~\cite{ratio}, so that $\mu$ is close to
 the value in Eq. \ref{fabienteytaud} for moderate values of $\lambda$.
We perform experiments in the open source platform Nevergrad~\cite{nevergrad}.

While previous experiments (Figures \ref{fig:exp_th_c} and \ref{fig:exp_th_nc}) were performed in a controlled ad hoc environment, we work here with more realistic conditions: the sampling is Gaussian (i.e. not uniform in a ball), the objective functions are not all sphere-like, and budgets vary but are not asymptotic.
Figures \ref{nevergrad}, \ref{nevergrad25}, \ref{nevergrad200} present our results in dimension $3$, $25$ and $200$ respectively.
The objective functions are randomly translated using ${\cal N}(0,0.2I_d)$.
The objective functions are defined as
  $f_{Sphere}(x) = ||x||^2$, 
$f_{Cigar}(x) = 10^6 \sum_{i=2}^d x_i^2 + x_1^2$, 
$f_{HM}(x)=\sum_{i=1}^d x_i^2\times(1.1+\cos(1/x_i))$,
$f_{Rastrigin}(x) = 10d + f_{sphere}(x) - 10 \sum_i \cos(2\pi x_i)$.
Our proposed equations \texttt{TEAvg} and \texttt{EAvg} are unstable: they sometimes perform excellently (e.g. everything in dimension $25$, Figure \ref{nevergrad25}), but they can also fail dramatically  (e.g. dimension $3$, Figure \ref{nevergrad}).  
Our combinations \texttt{THCHAvg} and \texttt{HCHAvg} perform well: in most settings, \texttt{THCHAvg} performs best. But the gap with the previously proposed \texttt{Avg} is not that big. The use of quasi-convexity as described in Section \ref{qc} was usually beneficial: however, in dimension $25$ for the Rastrigin function, it prevented the averaging from benefiting from the overall ``approximate'' convexity of Rastrigin. This phenomenon did not happen for the ``more'' multimodal function HM, or in other dimensions for the Rastrigin function.
\begin{figure}[t]
\centering
\includegraphics[trim={0 0 0 20}, clip, width=.48\textwidth]{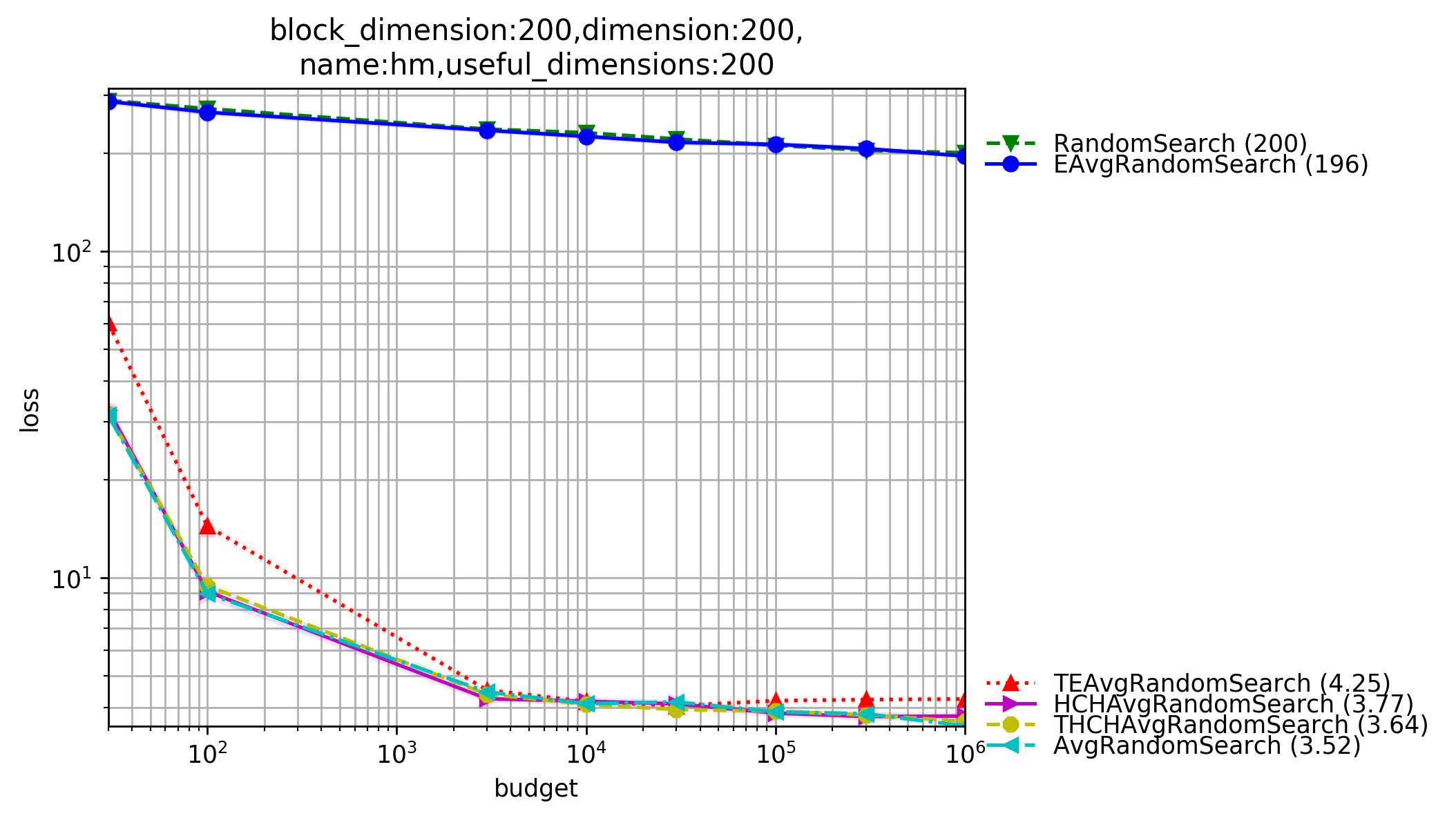}
\includegraphics[trim={0 0 0 20}, clip, width=.48\textwidth]{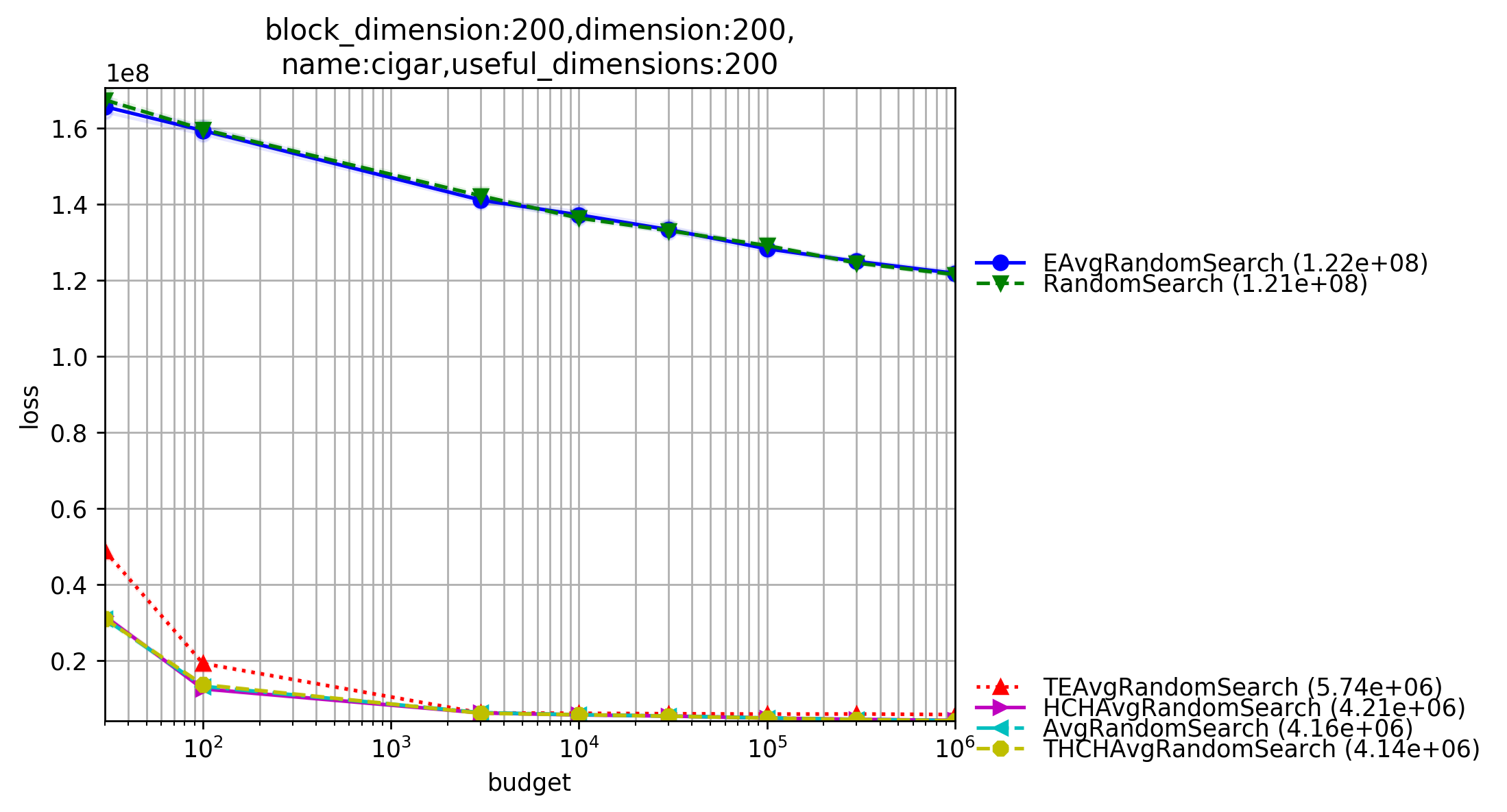}
\includegraphics[trim={0 0 0 20}, clip, width=.48\textwidth]{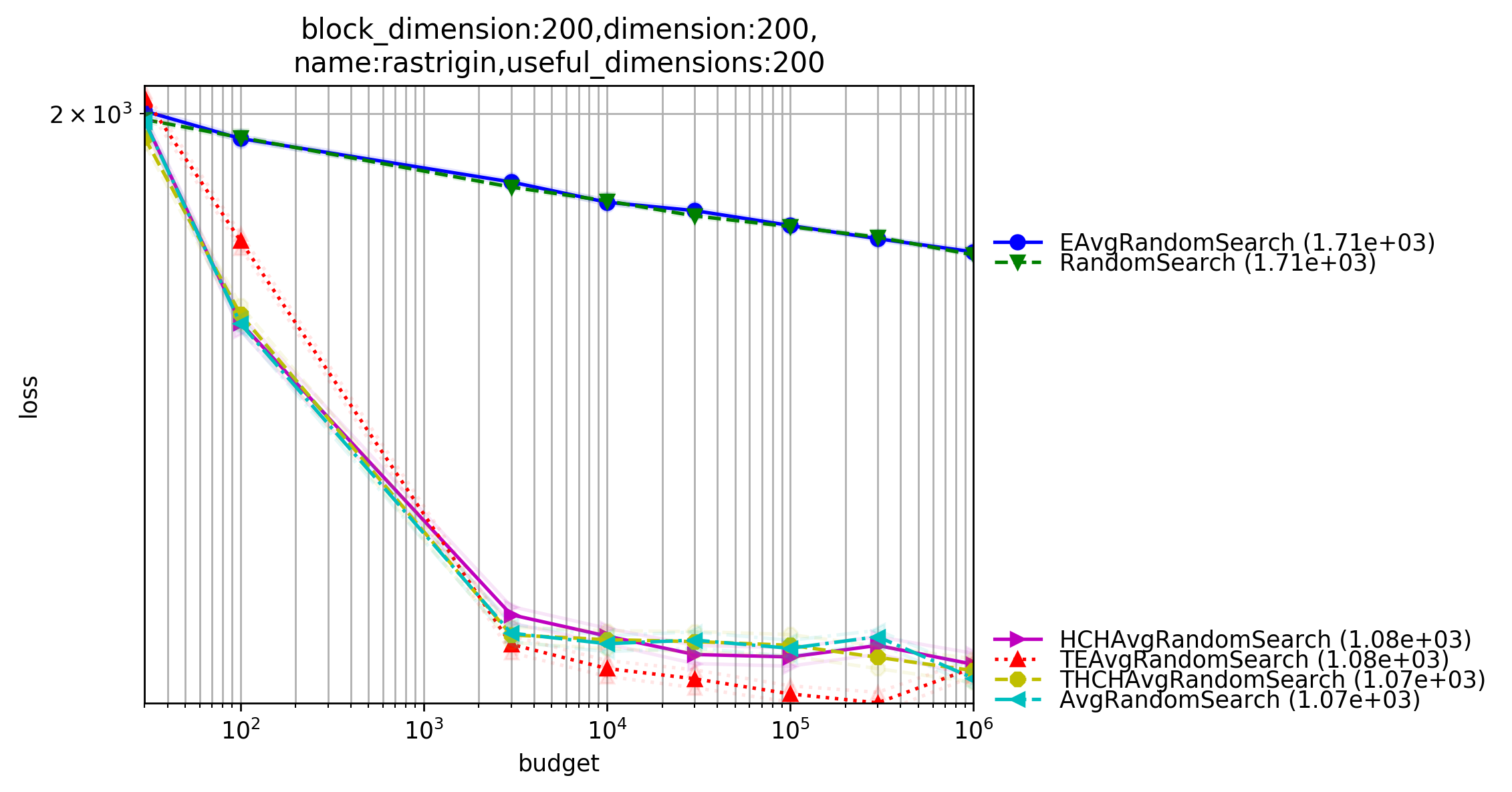}
\includegraphics[trim={0 0 0 20}, clip, width=.48\textwidth]{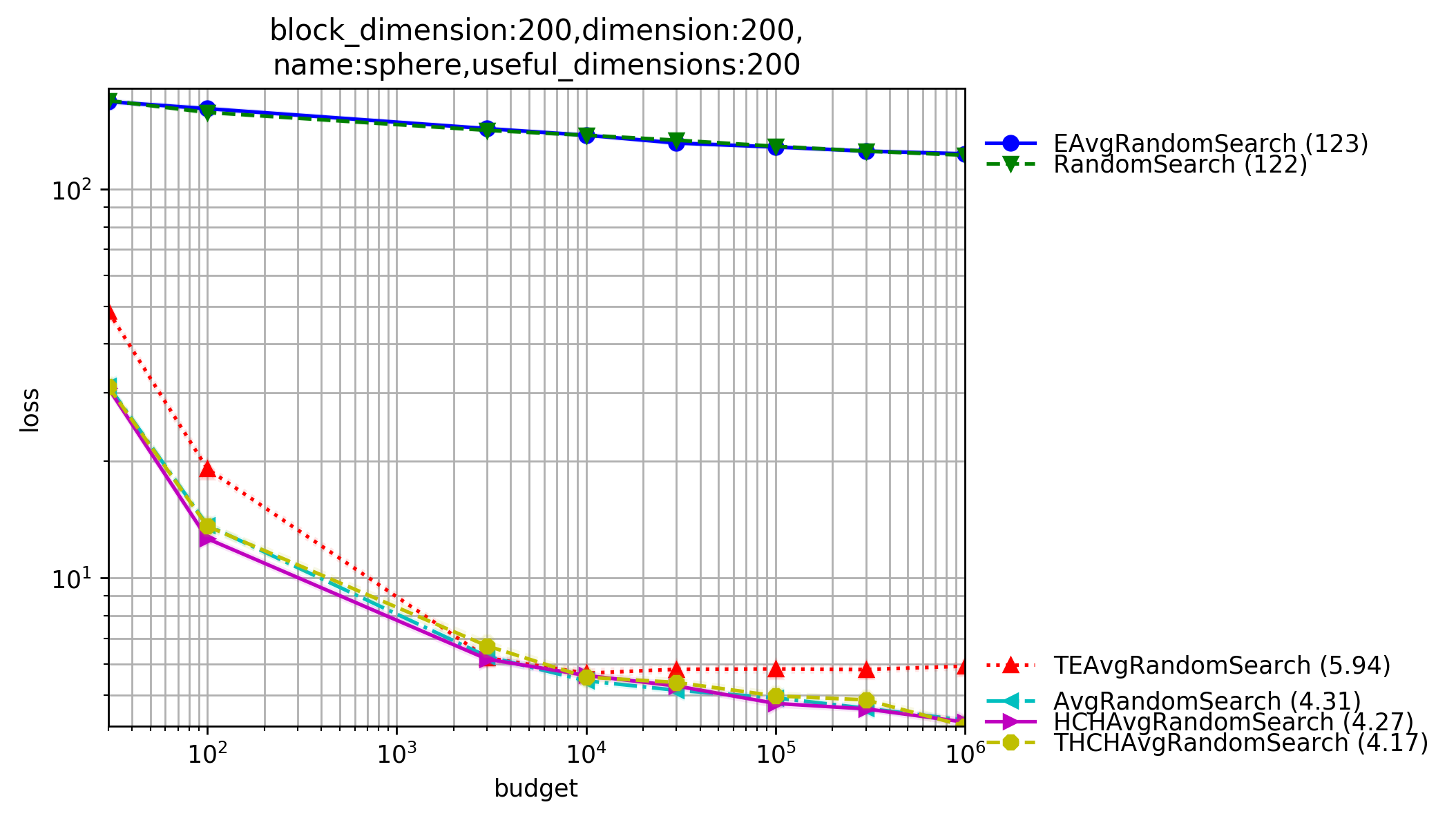}\caption{\label{nevergrad200}Experimental curves comparing various methods for choosing $\mu$ as a function of $\lambda$ in dimension $200$ (Figures \ref{nevergrad} and \ref{nevergrad25}, continued for dimension 200). Confidence intervals shown in lighter color (they are quite small, and therefore they are difficult to notice). Our proposed methods \texttt{THCHAvg} and \texttt{HCHAvg} perform well overall.}
\end{figure}
\section{Conclusion}
We have proved formally that the average of the $\mu$ best is better than the single best in the case of the sphere function (simple regret $O(1/\lambda)$ instead of $O(1/\lambda^{2/d})$) with uniform sampling. We suggested a value $\mu=\lfloor c\lambda\rfloor$ with $0<c<(1-\epsilon)^{d}$. Even better results can be obtained in practice using quasi-convexity, without losing the theoretical guarantees of the convex case on the sphere function. Our results have been successfully implemented in \cite{nevergrad}. The improvement compared to the state of the art, albeit moderate, is obtained without any computational overhead in our method, and supported by a theoretical result.\\

{\bf{Further work.}} Our theorem is limited to a single iteration, i.e. fully parallel optimization, and to the sphere function. Experiments are positive in the convex case, encouraging more theoretical developments in this setting. We did not explore approaches based on surrogate models. Our experimental methods include an automatic choice of $\mu$ in the multimodal case using quasi-convexity, for which the theoretical analysis has yet to be fully developed - we show that this is not detrimental in the convex setting, but not that it performs better in a non-convex setting.
We need an upper bound on the distance between the center of the sampling and the optimum for our results to be applicable (see parameter $\epsilon$): removing this need is an worthy consideration, as such a bound is rarely available in real life.

\newpage
\FloatBarrier
\bibliographystyle{splncs04}
\bibliography{lsca}
\end{document}